%
%
%
%
%
%
\RequirePackage{fix-cm}
\documentclass[smallextended]{svjour3}       
\smartqed  
\usepackage{graphicx}
\usepackage{amssymb,amsmath,amsfonts}
\usepackage[shortlabels]{enumitem}
\usepackage{float}
\usepackage{cite}
\newtheorem{algorithm}{Algorithm}
\newtheorem{assumption}{Assumption}

\newcommand*{\ee}{{\rm e}}
\newcommand*{\jj}{{\rm j}}

\newcommand{\vecto}{{\rm vect}}
\newcommand{\conc}{{\rm conc}}
\newcommand{\dom}{{\rm dom}}
\newcommand{\intr}{{\rm int}}

\newcommand{\paren}[1]{\left(#1\right)}

\newcommand{\norm}[1]{\left\| #1 \right\|}
\newcommand{\Ebb}{\mathbb{E}}

%
%
%
%
%
\begin{document}

\title{Nonuniform Defocus Removal for Image Classification
	\thanks{This project has received funding from the ECSEL Joint Undertaking (JU) under grant agreement No. 826589. The JU receives support from the European Union's Horizon 2020 research and innovation programme and Netherlands, Belgium, Germany, France, Italy, Austria, Hungary, Romania, Sweden and Israel.}
}
\titlerunning{Nonuniform Defocus Removal for Image Classification}

\author{Nguyen Hieu Thao, Oleg Soloviev, Jacques Noom and Michel Verhaegen}


\institute{Nguyen Hieu Thao\at
	Delft Center for Systems and Control,
	Delft University of Technology, 2628CD Delft, The Netherlands. Department of Mathematics, School of Education, Can Tho University, Can Tho, Vietnam.
	\email{h.t.nguyen-3@tudelft.nl, nhthao@ctu.edu.vn}\\
	Oleg Soloviev\at
	Delft Center for Systems and Control,
	Delft University of Technology,
	2628CD Delft, The Netherlands.
	Flexible Optical B.V., Polakweg 10-11, 2288 GG Rijswijk, The Netherlands.
	\email{o.a.soloviev@tudelft.nl}\\
	Jacques Noom\at
	Delft Center for Systems and Control,
	Delft University of Technology,
	2628CD Delft, The Netherlands.
	\email{j.noom@tudelft.nl}\\
	Michel Verhaegen\at
	Delft Center for Systems and Control,
	Delft University of Technology,
	2628CD Delft, The Netherlands.
	\email{m.verhaegen@tudelft.nl}
}


\maketitle

\begin{abstract}
	
	We propose and study the single-frame anisoplanatic deconvolution problem associated with image classification using machine learning algorithms, named the \textit{nonuniform defocus removal} (NDR) problem.
	Mathematical analysis of the NDR problem is done and the so-called \textit{defocus removal} (DR) algorithm for solving it is proposed. Global convergence of the DR algorithm is established without imposing any unverifiable assumption.
	Numerical results on simulation data show significant features of DR including solvability, noise robustness, convergence, model insensitivity and computational efficiency. Physical relevance of the NDR problem and practicability of the DR algorithm are tested on experimental data. Back to the application that originally motivated the investigation of the NDR problem, we show that the DR algorithm can improve the accuracy of classifying distorted images using convolutional neural networks.
	The key difference of this paper compared to most existing works on single-frame anisoplanatic deconvolution is that 
	the new method does not require the data image to be decomposable into isoplanatic subregions.
	Therefore, solution approaches partitioning the image into isoplanatic zones are not applicable to the NDR problem and those handling the entire image such as the DR algorithm need to be developed and analysed.
	
	\keywords{Anisoplanatic deconvolution, Nonuniform defocus removal, Image reconstruction, Computational imaging, Inverse problems, Optimization algorithm, Numerical analysis, Image classification}
	
\end{abstract}

\section{Introduction}\label{s:introduction}
Improving the quality of images degraded by optical aberrations or imperfections of photographic settings is an important and active research topic with applications in many fields of imaging science such as machine vision \cite{SonHlaBoy14, FenBou15}, medical imaging \cite{FosHun79}, astronomy \cite{Rod99}, microscopy \cite{BooNeiJusWil02, Ji17}, adaptive optics \cite{FliRig05}, \textit{etc}.
In the simplest case, the blurred image can be modelled as the convolution of the object with a single and known point spread function (PSF). This problem, known as \textit{isoplanatic deconvolution} or \emph{deblurring}, has been intensively studied, see, \textit{e.g.}, \cite{Can76, DaiFie87, WilSolPozVdoVer17}.
In the more challenging case where the distortion effects are spatially variant over the image, the problem known in astronomy and microscopy as \emph{anisoplanatic deconvolution} \cite{FliRig05, VorCar01, ChaWuTsa17, Pozzi20} and in machine vision and consumer imaging as \emph{nonuniform deblurring} \cite{Nagy1998, Bardsley2006, Kim2016, Pan2019} in essence requires the evaluation of the blurring operator at individual image pixels.

In many applications of nonuniform deblurring \cite{Ji17, Nagy1998, Bardsley2006, Kim2016, Pan2019, Kieu2016, SroKamLu16} the blurring profile is approximately decomposable into a finite number of isoplanatic zones (piecewise constant), and standard  deconvolution approaches can be carried out in individual zones and the solution is then synthesized.
However, this approach is not applicable to problems in which the data image cannot be separable for the isoplanatic zones of the object due to the large number of the required zones or their small size compared to the size of the PSFs.
Such blur is in the literature often referred to as \textit{pixelwise} as opposed to the \textit{piecewise constant}.
A number of solution approaches for pixelwise deblurring inherently tied to a particular application have been proposed and developed.
In astronomical imaging, the approach in which the missing blur kernels are linearly interpolated was proposed in \cite{Nagy1998} and improved later in \cite{Hirsh2010, Thiebaut2016}.
In camera shake removal, the approach in which the blur is parametrised and the parametrisation is then exploited in the deconvolution (\textit{e.g.}, through the trajectory of the camera pose) was used in \cite{Kim2016, Hirsch2011, Yue2015}.
In motion deblurring, the approach making use of the depth information of the scene given by an additional camera or estimated from the data image was considered in \cite{Pan2019, LiXu2012, Xu2016}.

Different analysis is often required for each specific application of nonuniform deblurring based on the structure of its own blurring profile and the prior knowledge of the solution.
In this paper we propose to study a practically relevant application of pixelwise deblurring in \textit{image classification}.
Having a well-trained neural network for classifying standard (undistorted) images at our disposal, we want to make it work also for nonuniformly distorted images.
The research question was motivated by the rapidly growing applications of computer vision in fault detection and feature recognition using machine learning algorithms.
The influence of various image degradation modalities (\textit{e.g.}, isoplanatic defocus, motion blur, noise corruption, low-resolution, fisheye lens, underwater effects, \textit{etc}) on the performance of the classification algorithm has been studied in the literature, see, \textit{e.g.}, \cite{Pei2019, Endo2020} and the references therein.
In this paper we analyse the distorted images caused by spatially varying defocus effects, particularly due to imperfections of the photographic settings.
To make machine learning algorithms able to work with such distorted images while being trained with undistorted ones, we propose to  perform an additional image preprocessing step to correct the distorted images before classification instead of modifying the neural network. The result will be an improvement of the accuracy of the classification algorithm.
Motivated by these meaningful applications, this paper is devoted to studying the \textit{single-frame anisoplanatic deconvolution} problem with data distorted by nonuniform defocus effects.

Blurring effects caused by nonuniform defocus occur in all imaging scenarios with fast aperture where the depth coordinate is not (approximately) constant over the object to be imaged.
The object depth profile determined by the object shape and position relative to the camera is often required for a complete anisoplanatic deconvolution.
However, the object shape is available only in problems with \textit{a priori} known reference, in particular, it is irrelevant in the analysis of this paper which is associated with the application in image classification.
We thus restrict the paper objective to removing nonuniform defocus effects caused by imperfections of the imaging angles.
More specifically, we consider two-dimensional pictorial objects degraded by nonuniform defocus induced by the inclination (non perpendicularity) of the camera axis with respect to the object plane, see Sect. \ref{subs:application model} for the problem formulation and Sect. \ref{s:image classification} for the interested application.
The underlying condition that the data image cannot be separated for isoplanatic subregions of the object is the main difference of this paper compared to most existing works on single-frame anisoplanatic deconvolution, see also Remark \ref{r:indecomposable data}.
This is roughly explained visually in Fig. \ref{fig:intro} where the first figure depicts an object depth profile for which the data image can be (approximately) separable for individual defocus zones while the second illustrates an inseparable case where the constant defocus zones are too small.
In other words, we analyse the case where nonuniform defocus appears in an individual object (\textit{e.g.}, a single coin, an alphabetical letter, a welding spot, \textit{etc}) in contrast to the case where multiple objects each with different but (approximately) constant defocus value appear in the image.
As a consequence, solution approaches partitioning the image into isoplanatic zones are not applicable to the problem considered in this paper and global algorithms handling the entire image need to be developed.

\begin{figure}[ht]
	\centering
	\includegraphics[width=.42\columnwidth]{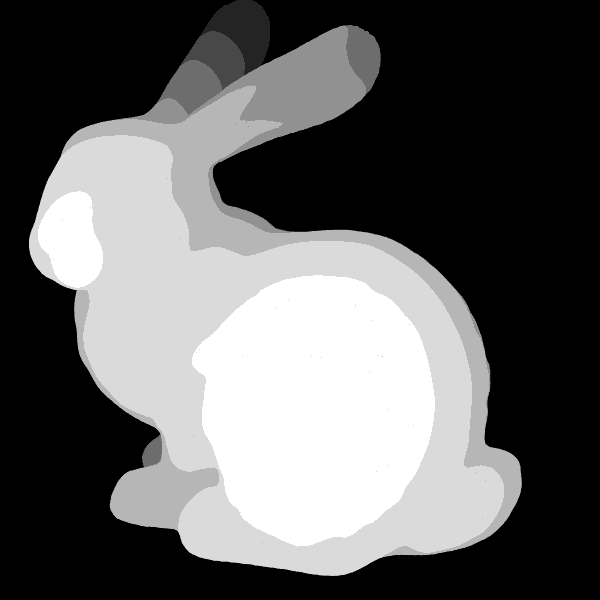}(a)\hspace{.05\columnwidth}%
	\includegraphics[width=.42\columnwidth]{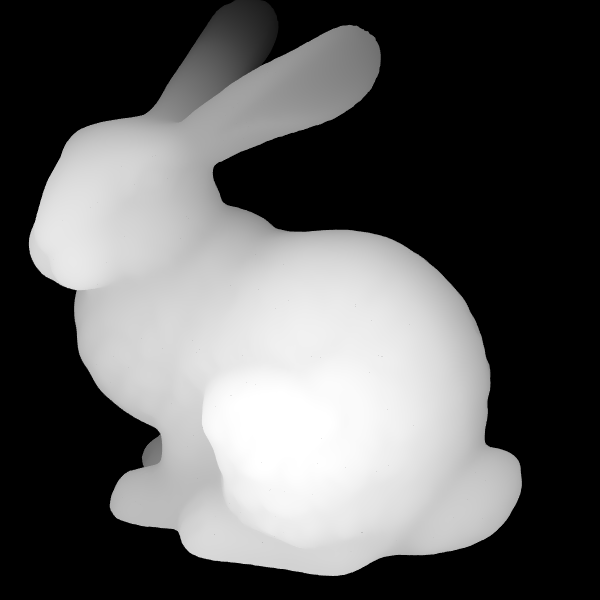}(b)
	\caption{Synthetic data depicts (a) an object depth profile for which the data image can be approximately separable for individual defocus zones of the object versus (b) an \textit{inseparable} case where the constant defocus zones are too small. Solution approaches partitioning the data into isoplanatic zones are applicable to (a) but not (b) for which global algorithms handling the entire data are required.}
	\label{fig:intro}
\end{figure}

The organization and the main contribution of this paper are as follows. The single-frame anisoplanatic deconvolution problem associated with image classification using machine learning algorithms is proposed in Sect. \ref{s:problem formulation}.
For brevity, it is called the \textit{Nonuniform Defocus Removal} (NDR) problem in the sequel.
In Sect. \ref{s:solution algorithm} we frame the NDR problem in an optimization context such that it enables the application of almost any state-of-the-art optimization algorithm.
Particularly, using the algorithmic scheme introduced by Beck and Teboulle \cite{BecTeb09} generally known as FISTA, we develop the so-called \textit{Defocus Removal} (DR) algorithm for the NDR problem.
Sparsity and parallel structure of the NDR problem is exploited to improve computational efficiency of DR compared to the direct implementation of FISTA, see the details in Algorithm \ref{al:fast gradient}.
The DR algorithm, which is the combination of FISTA and Algorithm \ref{al:fast gradient}, can be viewed as a computationally efficient version of FISTA applied to the NDR problem.
It is worth mentioning that comparison of DR with other state-of-the-art algorithms is not a goal of this paper as the efficiency of FISTA has been well known.
Convergence analysis of the DR algorithm and its conditions are presented in Sect. \ref{s:convergence analysis}.
In particular, we establish a global convergence criterion for DR without imposing any unverifiable mathematical assumption (Theorem \ref{t:NDR convergence}).
Numerical results on simulation data showing significant features of the DR algorithm including solvability, noise robustness, convergence, model insensitivity and computational efficiency are reported in Sect. \ref{s:numerical results}.
In Sect. \ref{s:exp_data} physical relevance of the NDR problem and practicability of the DR algorithm are tested on experimental data.
We demonstrate in Sect. \ref{s:image classification} how the DR algorithm can be used to improve the accuracy of classifying distorted images using convolutional neural networks.
This result shows that the proposed solution approach for nonuniform defocus removal indeed works for the target application that originally motivated the investigation of the NDR problem.
Concluding remarks are given in Sect. \ref{s:conclusion}.

\section{Problem formulation}\label{s:problem formulation}

In this paper we consider visually degraded images caused by spatially varying defocus, \textit{i.e.}, the object to be imaged does not lie in a single depth-of-field (defocus zone) of the camera.
Consider the object $o$ locating in a number of defocus zones denoted by $D_n$ ($n=1,2,\ldots,N$), then the isoplanatic section of $o$ in the $n$th zone is determined by the binary mask $\mu_n$ given by
\[
\mu_n(\xi,\eta) = \begin{cases}
	1 & \mbox{if } o(\xi,\eta) \in D_n,\\
	0 & \mbox{otherwise},	 
\end{cases}
\]
where and elsewhere in this paper $(\xi,\eta)$ denotes the coordinates of the referred two-dimensional array.
Both the defocus zones and the masks are determined by the \textit{depth profile} of the object which is assumed to be known in this paper.

\subsection{Nonuniform defocus blurring}\label{subs:NDB}

The Point Spread Function (PSF) for each defocus zone $D_n$ can be modelled using the Fourier transform \cite{Goo05}:
\begin{equation}\label{p_n}
	p_n = \left|\mathcal{F}\paren{A \odot \ee^{\jj d_n Z_2^0}}\right|^2\quad (n=1,2,\ldots,N),
\end{equation}
where $\mathcal{F}$ is the 2-dimensional Fourier transform, $A$ is the \textit{generalized pupil function}, $\odot$ is the elementwise product, $Z_2^0$ is the Zernike polynomial of order two and azimuthal frequency zero, $\jj=\sqrt{-1}$ is the imaginary unit, and $d_n$ is the \textit{normalized depth} of the defocus zone {$D_n$}.
For each $n=1,2,\ldots,N$, the normalized depth $d_n$ is linearly proportional to the depth (\textit{i.e.}, the $z$-coordinate) of the zone $D_n$ and the linear coefficient is determined from the camera specifications and the photographic settings.
In this paper the optical aberration of the camera is presumably ignorable, \textit{i.e.}, the diffraction-limited PSF is not resolved by the camera pixels and $A$ represents the known binary function representing the camera aperture.

Having partitioned the object $o$ into isoplanatic sections using the masks $\mu_n$, we can model its image as follows:
\begin{equation}\label{forward model}
	i = \sum_{n=1}^{N}\paren{\mu_n\odot o} * p_n\; +\; \omega,
\end{equation}
where $*$ is the 2-dimensional convolution operation, and $\omega$ represents the (unknown) discrepancy between the theoretically predicted data and the actually measured one due to noise, model deviations, \textit{etc}.

\begin{remark}[decomposable object versus indecomposable data]\label{r:indecomposable data}
	In this study the cause of the nonuniform defocus to be removed is due to the fact that the object does not lie in a single depth-of-field of the camera. It is worth keeping in mind that the object itself is \emph{decomposable} into isoplanatic defocus zones.
	However, the data image is assumed to be \emph{inseparable} for those zones.
	In other words, the image $i$ formulated in (\ref{forward model}) is not (approximately) separable for the individual terms $(\mu_n\odot o)*p_n$, for instance, as the zones $\mu_n\odot o$ are too small compared to the PSFs although they are mutually separated.
	The underlying assumption of \emph{indecomposable} data is the main difference of this paper compared to most existing works on single-frame nonuniform deblurring.
\end{remark}

\subsection{Inverse problem}

The inverse problem under investigation is to restore the object $o$ from the (single) image $i$ given by (\ref{forward model}). In this paper the error term $\omega$ in (\ref{forward model}) is assumed to be Gaussian.
It is worth mentioning that this assumption does not exclude the case of Poisson noise since in view of the central limit theorem, Poisson noise can be well approximated by a Gaussian distribution provided that the image is registered with a sufficiently large number of photon counts.
The \textit{maximum-likelihood} approach applied to equation (\ref{forward model}) leads to the following linear least squares problem:
\begin{equation}\label{OP1}
	\min\; f(o) := \frac{1}{2}\norm{i - \sum_{n=1}^{N}\paren{\mu_n\odot o} * p_n}^2,
\end{equation}
where $\|\cdot\|$ denotes the Frobenius norm. In spite of its simplicity with the only unknown $o$, this single-frame isoplanatic deconvolution problem is ill-posed, especially in the presence of noise and model deviations. 

In this paper the object $o$ is assumed to take values in $[0,1]$ and the set of all matrices satisfying this constraint is denoted by $\Omega$. It has been widely known that taking this constraint into account can improve the robustness of deconvolution algorithms (\textit{e.g.}, \cite{WilSolPozVdoVer17}). Among standard constrained optimization techniques, we choose to reformulate (\ref{OP1}) with the additional constraint $o\in \Omega$ as the following unconstrained optimization problem using the indicator function \cite{BoyVan04}:
\begin{equation}\label{NDRP}
	\min\; F(o) := f(o) + \iota_{\Omega}(o),
\end{equation}
where $\iota_{\Omega}(o)=0$ if $o\in \Omega$ and $\iota_\Omega(o)=\infty$ otherwise, and $f$ is given by (\ref{OP1}).
In the sequel (\ref{NDRP}) will be referred to as the \emph{Nonuniform Defocus Removal} (NDR) problem.

\subsection{Application model in image classification}\label{subs:application model}

The depth profile of the object plays a central role in studying the NDR problem. Different analysis is required for different application of (\ref{NDRP}) based on the particular structure of the depth profile of the object. Motivated by its practical application in computer vision using \textit{convolutional neural networks} (see Sect. \ref{s:image classification} for an example in image classification), in the sequel we focus on two-dimensional pictorial objects degraded by nonuniform defocus induced by the inclination (non perpendicularity) of the camera axis with respect to the object plane. It is worth mentioning that we are not trying to eliminate all sources of defocus in the image, especially those inherent from three-dimensional scenes or intrinsic to the image itself (for instance, being a blurry photo), but rather the one caused by imperfections of the photographic settings although the designed algorithm can also be extended to handle any known depth profile.

Up to a rotation the defocus zones of a (square) pictorial object can be assumed to be equal and horizontal.\footnote{We work with \textit{vertical} defocus zones in Sect. \ref{s:exp_data}.}
The number of pixel rows per zone is called the \textit{Depth of Field} (DoF) and denoted by $s$ in the sequel.
Let the $z$-coordinate be monotone decreasing along the object rows and the defocus zones be numbered also starting from the top. Then the normalized depth $d_n$ of defocus zone $D_n$ is linearly dependent on the ordering number $n$ with the linear coefficient denoted by $d$, that is,
\begin{equation}\label{d_n}
	d_n = d(n_0-n),\quad (n=1,2,\ldots,N),
\end{equation}
where $n_0$ is the position of the in-focal zone (without defocus).

The two parameters $d$ and $n_0$ are respectively termed as the \textit{blur coefficient} and the \emph{focal position}. They are given by the camera specifications and the tilt angle of the optical axis relative to the object plane. Both are assumed to be known in this paper except in Sect. \ref{subs:sensitivity} where sensitivity of the proposed algorithm with respect to them is analysed.
In the analysis with experimental data in Sect. \ref{s:exp_data}, we propose \textit{ad-hoc} numerical approaches for estimating them using \textit{guide-star} objects (see Sect. \ref{subs:exp_params}). For convenience, the technical parameters of the NDR problem (\ref{NDRP}) applied to image classification formulated in this section are listed in Table \ref{tbl:params}.

\begin{table}[ht]
	\caption{Parameters of the NDR problem (\ref{NDRP}) applied to image classification}
	\label{tbl:params}
	\centering{
		\begin{tabular}[1\baselineskip]{|c|c|c|c|}
			\hline
			$N$ & $s$ & $d$ & $n_0$\\ \hline
			\#DoF zones & DoF size & blur coefficient & focal position\\ \hline
		\end{tabular}
	}
\end{table}

We conclude this section by discussing major challenges in inverting (\ref{forward model}) as well as in solving (\ref{NDRP}). First, typical difficulties of single-frame deconvolution/deblurring are relevant for the more challenging problem under consideration with spatially varying blurring effect.
Second, the inaccuracy of measurements near the boundary causes sequential errors in restoring the object.
Finally, being the main difference from the literature of anisoplanatic deconvolution, the data image is not separable for individual blurring kernels as illustrated in Fig. \ref{fig:intro}.
Consequently, global solution approaches processing the entire data image need to be developed and analysed.
Nonseparability of the data for individual blurring kernels also refrains us from addressing the NDR problem in the frequency domain.

\section{Solution algorithms}\label{s:solution algorithm}

We first observe that (\ref{NDRP}) can be cast in the framework of convex optimization as both functions $f$ and $\iota_\Omega$ are convex. This suggests that the NDR problem can be addressed using state-of-the-art algorithms in convex optimization. Thus instead of trying creating new solution methods for it which would likely end up with reinventing the wheel, we make use of existing efficient algorithms (with necessary modifications) and rigorously explain why they should work.

\subsection{Proximal gradient methods}

The mathematical properties of $f$ and $\Omega$, shown later in Lemma \ref{l:smooth of f}, suggest that the composite optimization model (\ref{NDRP}) can be efficiently solved by \emph{proximal gradient methods}.
To proceed further in this direction, we need to calculate the metric projection on the constraint set $\Omega$ and the gradient of the cost function $f$.

\begin{lemma}[projection on $\Omega$]\label{l:P_Ocal}
	The projection on $\Omega$ is given by
	\begin{equation*}
		P_\Omega(x)(\xi,\eta) =
		\begin{cases}
			0 & \mbox{if }\;\; x(\xi,\eta) < 0,\\
			x(\xi,\eta) & \mbox{if }\;\;  0\le x(\xi,\eta) \le 1,\\
			1 & \mbox{if }\;\; x(\xi,\eta) > 1.
		\end{cases}
	\end{equation*}
\end{lemma}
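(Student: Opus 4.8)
The plan is to exploit the product structure of $\Omega$ so that the projection decouples entrywise, reducing the matrix problem to a one-dimensional one. First I would record that $\Omega$ is a nonempty, closed, convex subset of the finite-dimensional space of real matrices equipped with the Frobenius inner product: it is precisely the Cartesian product, over all index pairs $(\xi,\eta)$, of the closed interval $[0,1]$. By the standard Hilbert-space projection theorem, $P_\Omega$ is therefore single-valued and well defined, and for any matrix $x$ the projection $P_\Omega(x)$ is the unique minimizer of $y\mapsto\tfrac12\norm{x-y}^2$ over $y\in\Omega$.

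Next I would use the fact that the squared Frobenius norm separates as $\norm{x-y}^2=\sum_{\xi,\eta}\bigl(x(\xi,\eta)-y(\xi,\eta)\bigr)^2$, while the constraint $y\in\Omega$ imposes the independent requirement $0\le y(\xi,\eta)\le 1$ on each entry. Hence the minimization splits into a family of scalar problems, one per entry: minimize $(a-t)^2$ over $t\in[0,1]$ with $a=x(\xi,\eta)$. Solving these in parallel and reassembling the entries yields $P_\Omega(x)$.

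It then remains to solve the scalar problem, which is elementary: the map $t\mapsto(a-t)^2$ is strictly convex with unconstrained minimizer $t=a$, so its minimizer over $[0,1]$ is $a$ when $a\in[0,1]$, and otherwise the nearest endpoint --- namely $0$ if $a<0$ and $1$ if $a>1$ --- which is exactly the truncation in the statement. Equivalently, one may verify the variational characterization $\langle x-P_\Omega(x),\,y-P_\Omega(x)\rangle\le 0$ for all $y\in\Omega$ by checking it entrywise for the candidate map. I do not anticipate any genuine obstacle; the only point deserving an explicit line of justification is the entrywise decoupling, i.e.\ that minimizing a finite sum of independent nonnegative terms over a product set is achieved by minimizing each term separately over its own factor.
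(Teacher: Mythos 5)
Your proof is correct and is exactly the standard argument the paper has in mind: it states Lemma~\ref{l:P_Ocal} without proof, remarking only that it is ``standard,'' and the intended justification is precisely the entrywise decoupling of the Frobenius norm over the product set $[0,1]^{(l,w)}$ followed by the scalar truncation onto $[0,1]$. Nothing further is needed.
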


\begin{lemma}[gradient of $f$]\label{l:nabla f}
	The function $f$ given by \eqref{OP1} is differentiable everywhere with the gradient given by
	\begin{equation}\label{nabla f}
		\nabla f(x) = -\sum_{n=1}^{N}\mu_n\odot \paren{p_n^{\dagger} * \paren{i - \sum_{l=1}^{N}\paren{\mu_l\odot x} * p_l}},
	\end{equation} 
	where $p_n^{\dagger}$ denotes the reflection of $p_n$ via its origin (centre),
	\begin{equation*}
		p_n^{\dagger}(\xi,\eta) := p_n(-\xi,-\eta)\quad (n=1,2,\ldots,N).
	\end{equation*}
\end{lemma}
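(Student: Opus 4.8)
The plan is to recognize $f$ as the composition of an affine map with the squared Frobenius norm, so that differentiability is automatic and the gradient drops out of the adjoint of the linear part. Concretely, I would introduce the linear operator $\mathcal{L}$ on the space of (real) matrices defined by $\mathcal{L}(x)=\sum_{n=1}^{N}\paren{\mu_n\odot x}*p_n$, so that $f(x)=\tfrac12\norm{i-\mathcal{L}(x)}^2$. Since the ambient space is finite-dimensional and $\mathcal{L}$ is linear, $f$ is a quadratic polynomial in the entries of $x$ and hence differentiable everywhere. Expanding
\[
f(x+h)=f(x)-\inp{i-\mathcal{L}(x)}{\mathcal{L}(h)}+\tfrac12\norm{\mathcal{L}(h)}^2
\]
and using linearity of $h\mapsto\mathcal{L}(h)$, the first-order term equals $\inp{-\mathcal{L}^\ast\paren{i-\mathcal{L}(x)}}{h}$, which identifies the gradient as $\nabla f(x)=\mathcal{L}^\ast\paren{\mathcal{L}(x)-i}=-\mathcal{L}^\ast\paren{i-\mathcal{L}(x)}$, where $\mathcal{L}^\ast$ is the adjoint of $\mathcal{L}$ with respect to the Frobenius inner product $\inp{\cdot}{\cdot}$.

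It then remains to compute $\mathcal{L}^\ast$ explicitly. I would write $\mathcal{L}$ as a sum of compositions of two elementary operators: masking $M_n\colon x\mapsto\mu_n\odot x$ and convolution $C_n\colon x\mapsto x*p_n$, so that $\mathcal{L}=\sum_{n=1}^{N} C_n\circ M_n$ and therefore $\mathcal{L}^\ast=\sum_{n=1}^{N} M_n^\ast\circ C_n^\ast$. The masking operator is self-adjoint because $\mu_n$ is real-valued: $\inp{\mu_n\odot x}{y}=\sum_{\xi,\eta}\mu_n(\xi,\eta)\,x(\xi,\eta)\,y(\xi,\eta)=\inp{x}{\mu_n\odot y}$, so $M_n^\ast=M_n$. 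For the convolution operator, the key identity is $\inp{x*p_n}{y}=\inp{x}{p_n^\dagger*y}$ with $p_n^\dagger(\xi,\eta):=p_n(-\xi,-\eta)$, i.e.\ $C_n^\ast$ is convolution (correlation) with the reflected kernel $p_n^\dagger$; this follows from a change of the summation index in the double sum defining the discrete convolution. Combining the two, $\mathcal{L}^\ast(y)=\sum_{n=1}^{N}\mu_n\odot\paren{p_n^\dagger*y}$, and substituting into $\nabla f(x)=-\mathcal{L}^\ast\paren{i-\mathcal{L}(x)}$ yields precisely \eqref{nabla f}.

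The only genuinely delicate point is the adjoint-of-convolution identity $\inp{x*p_n}{y}=\inp{x}{p_n^\dagger*y}$: one must fix the boundary convention for the discrete convolution (circular/periodic, as is customary for these images, or linear with zero padding) and verify that with that convention the adjoint is exactly convolution with $p_n^\dagger$ and not a boundary-truncated variant. Once the convolution model is pinned down this is a one-line reindexing and holds in either convention, so I do not expect a real obstruction there. Everything else — finite-dimensionality giving smoothness, the quadratic expansion giving the gradient, and self-adjointness of masking — is routine.
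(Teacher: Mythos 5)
Your proposal is correct and follows essentially the same route as the paper's own proof: introduce the linear operator $L(x)=\sum_{n}(\mu_n\odot x)*p_n$, observe that $f$ is quadratic so $\nabla f(x)=-L^*(i-L(x))$, and identify $L^*(y)=\sum_n \mu_n\odot(p_n^\dagger*y)$ via the self-adjointness of masking and the reflection identity for convolution. You in fact spell out the adjoint computation (and the boundary-convention caveat for the full-size convolution) in more detail than the paper, which dispatches it as ``elementary algebra manipulations.''
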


Lemma \ref{l:P_Ocal} is standard while the proof of Lemma \ref{l:nabla f} is given in Appendix \ref{a:nabla f}.
In the sequel we will utilize outstanding features of the fast proximal gradient method introduced by Beck and Teboulle \cite{BecTeb09}, often known as FISTA. This famous algorithm achieves an $O(1/k^2)$ rate of convergence in objective values compared to $O(1/k)$ of the standard (without acceleration) proximal gradient method \cite[Chapter 10]{Bec17}. For completeness, let us recall FISTA with constant stepsize applied to (\ref{NDRP}).

\begin{algorithm}[the FISTA algorithm \cite{BecTeb09}]\label{al:FISTA}\ 
	
	\emph{Input:}
	$o^{(0)}$ --- the initial guess of $o$, $x^{(0)} = o^{(0)}$ --- initial auxiliary parameter, $\lambda>0$ --- the stepsize, $t^{(0)}$ --- the initial acceleration parameter, and $K$ --- the number of iterations.
	
	\emph{Iteration process}: for $k=1,2,\ldots,K$, compute the updates as follows:
	\begin{equation}\label{FISTA:update}
		\begin{gathered}
			x^{(k+1)} = P_\Omega\paren{o^{(k)} - \lambda \nabla f\paren{o^{(k)}}},
			\\
			t^{(k+1)} = \frac{1+\sqrt{1+4{t^{(k)}}^2}}{2},
			\\
			o^{(k+1)} = x^{(k+1)} + \frac{t^{(k)}-1}{t^{(k+1)}}\paren{x^{(k+1)}-x^{(k)}}.
		\end{gathered}
	\end{equation}
\end{algorithm}

\subsection{Evaluation of $\nabla f$}

Direct implementation of Algorithm \ref{al:FISTA} to the NDR problem, however, encounters a major obstacle in terms of computational complexity due to the highly expensive formula (\ref{nabla f}) of $\nabla f$.
This significantly limits the applicability of Algorithm \ref{al:FISTA} to practical problems.
Fortunately, to some extent this barrier can be overcome for the interested application in image classification formulated in Sect. \ref{subs:application model} where the uniform structure of the masks $\mu_n$ enables faster evaluation of $\nabla f$.

The simplicity of the masks $\mu_n$ allows us to exploit their sparsity in computing $\nabla f(x)$ numerically.
As a result, we propose a fast method for evaluating (\ref{nabla f}) which is many times faster than the naive computing of (\ref{nabla f}), see also Sect. \ref{subs:complexity}.
The following algorithms are presented in details for full-size convolution operation and $\rho$ odd.

\begin{algorithm}[fast evaluation of $\nabla f$]\label{al:fast gradient}\ \\
	\emph{Input:} $x$ --- array of size $(l,w)$,
	
	\hspace*{0.54cm}  $p_n$ --- PSFs of size $(\rho, \rho)$,
	
	\hspace*{0.54cm} $i$ --- image of size $(l+\rho-1, w+\rho-1)$.\\	
	\emph{Initials:}\; $r=i$,\; $R = {\rm zeros}(N,s+2\rho,w+2\rho)$,
	
	\hspace*{0.75cm} $G = {\rm zeros}(l,w)$.\\
	\emph{Main calculations:}
	\begin{enumerate}
		\item reshape $x$ to the $(N,s,w)$-tensor $X$
		\item stack $p_n$ to form the $(N,\rho,\rho)$-tensor $P$
		\item $I=X*P$ --- convolution over $N$ channels
		\item For $n=1:N$, $r[(n-1)s:ns+\rho-2,:]$\; +=\; $I[n,:,:]$
		\item For $n=1:N$,\\
		\hspace*{0.5cm} $R[n,:,:]$\; +=\; $r[(n-1)s:ns+\rho-2,:]*p_n^{\dagger}$\\
		\hspace*{0.5cm} $G[(n-1)s:ns-1,:]$\; +=\; $R[n,\rho:-\rho,\rho:-\rho]$
	\end{enumerate}
	\emph{Output:} $\nabla f(x) = G$.
\end{algorithm}

\subsection{The proposed algorithm}

By integrating Algorithm \ref{al:fast gradient} into FISTA, we can propose an efficient solution method for the NDR problem applied to image classification, which is of our main interest in this paper.
In the sequel it is referred to as the \textit{Defocus Removal} (DR) algorithm.
The relevant parameters have been explained in Table \ref{tbl:params} and earlier in this section while the repetitive term $(n=1,2,\ldots,N)$ following the subscript $n$ is dropped for brevity.
We remark that the initial guess $o^{(0)}$ used in Algorithm \ref{al:NDR} results in better restoration compared to, \textit{e.g.}, random initials, especially in terms of convergence speed.

\begin{algorithm}[Defocus Removal (DR) algorithm]\label{al:NDR}\ \\
	\emph{Input:} $N$, $s$, $d$, $n_0$, $\rho$, $l$, $w$, image $i$ of size $(l+\rho-1, w+\rho-1)$, $\lambda>0$, $t^{(0)}$, and $K$.\\
	\emph{Initializations:}
	\begin{enumerate}[a)]
		\item compute $p_n$ according to (\ref{p_n}) with $d_n$ given by (\ref{d_n}),
		\item\label{initial_guess} set $o^{(0)}$ equal the $(l,w)$-central part of $i$,
		\item set $x^{(0)} = o^{(0)}$.	
	\end{enumerate}
	\emph{Iteration process}: for $k=1:K$,
	\begin{enumerate}
		\item evaluate $\nabla f\paren{o^{(k)}}$ using Algorithm \ref{al:fast gradient},
		\item update $x^{(k+1)}$, $t^{(k+1)}$ and $o^{(k+1)}$ according to (\ref{FISTA:update}).
	\end{enumerate}
	\emph{Output:} reshape $o^{(K)}$ to an array of size $(l,w)$.
\end{algorithm}

\begin{remark}[parallelizability of the DR algorithm]
	In Algorithm \ref{al:fast gradient} the array $x$ of size $(l,w)$ is reshaped to a 3-order tensor of size $(N,s,w)$ where $l=Ns$ and the blurring kernels $p_n$ of size $(\rho,\rho)$ are treated as a 3-order tensor of size $(N,\rho,\rho)$.
	These tensors in turn enable parallel computing of the convolution operations in (\ref{nabla f}) as explained in Appendix \ref{a:parallel computing}.
	This particularly shows that Algorithm \ref{al:fast gradient} and thus the DR algorithm are parallelizable.
	Such parallel computing would unfortunately require additional \textit{GPU hardwares} and has not been numerically implemented in the current study.
\end{remark}

\section{Convergence analysis}\label{s:convergence analysis}

Convergence analysis of the DR algorithm is presented in this section.
Our goal is to establish a global convergence criterion without imposing any unverifiable mathematical assumption.
While the analysis scheme is standard (see, \textit{e.g.}, the book \cite[Chapter 10]{Bec17}), proving that the physical properties of the NDR problem under consideration indeed fulfil all the mathematical conditions required for convergence (Lemma \ref{l:smooth of f}) is the main contribution of this section.

\subsection{Convergence results from convex optimization}

In this section $\Ebb$ is a finite dimensional Euclidean space, $\dom(\cdot)$ denotes the domain of a function, and $\intr(\cdot)$ denotes the interior of a set. We consider the following composite optimization model
\begin{equation}\label{OP}
	\min_{x\in \Ebb} F(x) := f(x) + g(x) 
\end{equation}
under the assumptions specified below.
\begin{assumption}[mathematical assumptions]\label{ass:convex}\ 
	\begin{enumerate}[\rm (i)]
		\item\label{ass:i} $g:\Ebb \to (-\infty,\infty]$ is proper closed and convex.
		\item\label{ass:ii} $f:\Ebb \to (-\infty,\infty]$ is proper closed and convex, $\dom(f)$ is convex, and $\dom(g) \subset \intr\paren{\dom(f)}$.
		\item\label{ass:iii} $f$ is $\Gamma$-\emph{smooth} on $\intr\paren{\dom(f)}$, \textit{i.e.}, $f$ is differentiable on $\intr\paren{\dom(f)}$ and $\forall x,y\in \intr\paren{\dom(f)}$:
		\begin{equation}\label{L_f smooth}
			\norm{\nabla f(x) -\nabla f(y)} \le \Gamma \norm{x-y}.
		\end{equation}
		\item\label{ass:iv} The solution set of \eqref{OP}, denoted by $S$, is nonempty.
	\end{enumerate}
\end{assumption}

In view of \ref{ass:iv} the optimal value of \eqref{OP}, denoted by $F_{\mathtt{opt}}$, is attainable and finite. The following result serves as the basis for our subsequent analysis \cite[Theorem 10.34]{Bec17}.

\begin{theorem}[convergence of FISTA]\label{t:FISTA convergence}
	Suppose that all the conditions of Assumption \ref{ass:convex} hold true.
	Let $x^{(k)}$ be a sequence generated by FISTA for solving \eqref{OP} with stepsize $1/\Gamma$.
	Then for any $x^*\in S$,
	\begin{equation}\label{O(1/k^2)}
		F(x^{(k)}) - F_{\mathtt{opt}} \le \frac{2\Gamma}{(k+1)^2}\norm{x^{(0)} - x^*} \quad\forall k\ge 1.
	\end{equation} 	
\end{theorem}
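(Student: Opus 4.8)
Since \eqref{O(1/k^2)} is the classical FISTA estimate, the plan is to reproduce the standard energy-function argument of \cite[Chapter~10]{Bec17} for the splitting $F=f+g$, with $g$ the nonsmooth (proper, closed, convex) term and $f$ the $\Gamma$-smooth term, and to point out where Assumption~\ref{ass:convex} enters. First I would record a one-step estimate for the proximal-gradient map $T(x):=\operatorname{prox}_{g/\Gamma}\!\paren{x-\tfrac{1}{\Gamma}\nabla f(x)}$ (the map underlying the update \eqref{FISTA:update} with $\lambda=1/\Gamma$): for every $x\in\intr(\dom(f))$ and every $y\in\dom(g)$,
\begin{equation*}
	F(y)-F\!\paren{T(x)}\;\ge\;\frac{\Gamma}{2}\norm{T(x)-x}^{2}+\Gamma\,\inp{x-y}{T(x)-x}.
\end{equation*}
This ``fundamental prox-gradient inequality'' follows by adding the descent lemma for the $\Gamma$-smooth part (a consequence of \eqref{L_f smooth} in Assumption~\ref{ass:convex}\ref{ass:iii}) to the subgradient inequality for $g$ at the point $T(x)$; Assumption~\ref{ass:convex}\ref{ass:i} and~\ref{ass:ii} make both ingredients legitimate and guarantee that $\nabla f$ exists at the iterates to which this estimate is applied.

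The heart of the argument is a monotone Lyapunov quantity. Writing $v^{(k)}:=F(x^{(k)})-F_{\mathtt{opt}}\ge 0$ and $u^{(k)}:=t^{(k)}x^{(k)}-\paren{t^{(k)}-1}x^{(k-1)}-x^{*}$, I would show that
\begin{equation*}
	\frac{2}{\Gamma}\paren{t^{(k)}}^{2}v^{(k)}+\norm{u^{(k)}}^{2}
\end{equation*}
is non-increasing in $k$. To this end, apply the inequality above at $x=o^{(k)}$ twice --- once with $y=x^{(k)}$ and once with $y=x^{*}$, using $x^{(k+1)}=T(o^{(k)})$ --- take the convex combination of the two results with weights $t^{(k+1)}-1$ and $1$, multiply through by $t^{(k+1)}$, and substitute the extrapolation rule $o^{(k+1)}=x^{(k+1)}+\frac{t^{(k)}-1}{t^{(k+1)}}\paren{x^{(k+1)}-x^{(k)}}$ together with the identity $\paren{t^{(k)}}^{2}=\paren{t^{(k+1)}}^{2}-t^{(k+1)}$ that follows from the $t$-update. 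After completing the square, the right-hand side becomes $\norm{u^{(k+1)}}^{2}-\norm{u^{(k)}}^{2}$, and rearranging gives the claimed monotonicity.

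Finally, iterating this monotonicity and treating the first index by one more use of the one-step estimate (at the initial point, with $y=x^{*}$ and $t^{(0)}=1$) yields
\begin{equation*}
	\frac{2}{\Gamma}\paren{t^{(k)}}^{2}v^{(k)}\;\le\;\norm{x^{(0)}-x^{*}}^{2},
\end{equation*}
while an elementary induction on $t^{(k+1)}=\frac{1+\sqrt{1+4(t^{(k)})^{2}}}{2}$ gives $t^{(k)}\ge(k+1)/2$; combining the two produces \eqref{O(1/k^2)}.

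The step I expect to be the real work is the monotonicity of the Lyapunov quantity: verifying that the rescaled convex combination of the two prox-gradient inequalities, once merged with the $t$-recursion and the extrapolation formula, telescopes \emph{exactly} into $\norm{u^{(k+1)}}^{2}-\norm{u^{(k)}}^{2}$ with no leftover cross terms. The one-step estimate is a routine combination of two elementary inequalities supplied by Assumption~\ref{ass:convex}, the bound $t^{(k)}\ge(k+1)/2$ is a one-line induction, and the attainability of $F_{\mathtt{opt}}$ used throughout is precisely Assumption~\ref{ass:convex}\ref{ass:iv}; alternatively one may simply invoke \cite[Theorem~10.34]{Bec17}.
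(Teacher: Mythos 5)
Your outline is correct: it is precisely the standard Beck--Teboulle energy-function argument, and the paper itself gives no proof of this theorem at all --- it simply invokes \cite[Theorem 10.34]{Bec17}, i.e.\ the very argument you reconstruct (your closing remark that one may just cite that theorem is exactly what the authors do). One useful byproduct of writing the argument out: your telescoping yields $\frac{2}{\Gamma}\paren{t^{(k)}}^{2}v^{(k)}\le\norm{x^{(0)}-x^{*}}^{2}$ and hence a final bound containing $\norm{x^{(0)}-x^{*}}^{2}$ \emph{squared}, which is the correct classical estimate; the exponent is missing from the displayed inequality in the theorem as printed, so your derivation in fact establishes the corrected statement.
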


\subsection{Convergence of the DR algorithm}

To apply the convergence criterion formulated in Theorem \ref{t:FISTA convergence} to the DR algorithm, the NDR problem (\ref{NDRP}) should be cast in the framework of (\ref{OP}) and all the requirements specified in Assumption \ref{ass:convex} need to be verified. Indeed, let us define $\mathbb{E} := \mathbb{R}^{(l, w)}$ and $g:=\iota_\Omega$ the indicator function of $\Omega= [0,1]^{(l,w)}$ where $(l,w)$ is the size of the object.
Then the \emph{proximal mapping} of $g$ is exactly the \emph{projector} on $\Omega$ characterized in Lemma \ref{l:P_Ocal} and FISTA applied to \eqref{OP} reduces to the DR algorithm applied to (\ref{NDRP}). The following lemma allows us to fully apply the convergence criterion in Theorem \ref{t:FISTA convergence} to the DR algorithm. The proof is presented in Appendix \ref{a:smoothness of f}.

\begin{lemma}\label{l:smooth of f}
	Regarding the NDR problem \eqref{NDRP}, the following statements hold true.
	\begin{enumerate}[\rm (i)]
		\item\label{as:i} $\iota_\Omega$ is proper closed and convex.
		\item\label{as:ii} $f$ is proper closed and convex, $\dom(f)$ is convex, and $\dom(\iota_\Omega) \subset \intr\paren{\dom(f)}$.
		\item\label{as:iii} $f$ is $\Gamma$-smooth on $\intr\paren{\dom(f)}$ with $\Gamma=\rho$.
		\item\label{as:iv} The solution set, denoted by $S$, is nonempty.
	\end{enumerate}
\end{lemma}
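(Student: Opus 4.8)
\textbf{Proof proposal for Lemma~\ref{l:smooth of f}.}
The plan is to verify the four items of Assumption~\ref{ass:convex} in the concrete setting $\Ebb=\Rbb^{(l,w)}$, $g=\iota_\Omega$ with $\Omega=[0,1]^{(l,w)}$, and $f$ given by \eqref{OP1}. Items \eqref{as:i} and \eqref{as:ii} are essentially soft: $\Omega$ is a nonempty closed convex box, so its indicator $\iota_\Omega$ is proper, closed (lower semicontinuous, since the sublevel sets are either $\Omega$ or empty) and convex; meanwhile $f$ is a composition of linear maps with the squared Frobenius norm, hence a finite-valued convex quadratic on all of $\Ebb$. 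Consequently $\dom(f)=\Ebb$ is convex, $\intr(\dom(f))=\Ebb$, and $\dom(\iota_\Omega)=\Omega\subset\Ebb=\intr(\dom(f))$, which gives \eqref{as:ii}. Since $f$ is finite everywhere it is trivially proper and, being continuous, closed.

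The substantive item is \eqref{as:iii}, the $\Gamma$-smoothness with the explicit constant $\Gamma=\rho$. First I would note $f$ is differentiable everywhere by Lemma~\ref{l:nabla f}, so it remains to bound the Lipschitz constant of $\nabla f$. Write the forward operator $\mathcal{B}:x\mapsto \sum_{n=1}^N(\mu_n\odot x)*p_n$, which is linear, so $f(x)=\tfrac12\norm{i-\mathcal{B}x}^2$ and $\nabla f(x)-\nabla f(y)=\mathcal{B}^*\mathcal{B}(x-y)$; thus the best Lipschitz constant of $\nabla f$ equals the operator norm $\norm{\mathcal{B}^*\mathcal{B}}=\norm{\mathcal{B}}^2$, i.e.\ the largest eigenvalue of $\mathcal{B}^*\mathcal{B}$. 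The key point is to bound $\norm{\mathcal{B}}^2\le\rho$. Here the structure from Remark~\ref{r:indecomposable data} and Sect.~\ref{subs:application model} enters: the masks $\mu_n$ have pairwise disjoint supports (the $N$ horizontal strips of height $s$), so $\mathcal{B}x$ is a sum of convolutions of non-overlapping pieces of $x$ with the PSFs $p_n$. I would estimate $\norm{\mathcal{B}x}$ by bounding each $\norm{(\mu_n\odot x)*p_n}$ in terms of $\norm{\mu_n\odot x}$ using the fact that $p_n$, being a normalized PSF from \eqref{p_n}, satisfies $\sum_{\xi,\eta}p_n(\xi,\eta)=1$ with $p_n\ge0$ (Parseval applied to \eqref{p_n} gives $\norm{p_n}_1=\norm{A\odot e^{\jj d_n Z_2^0}}^2=\norm{A}^2$, which is normalized to $1$), together with the fact that each $p_n$ is supported on a $\rho\times\rho$ window. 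Combining a Young-type inequality for convolution (so that convolving with $p_n$ has operator norm at most $\norm{p_n}_1=1$ on $\ell^2$) with the disjointness of the strips and a counting argument on how many of the $N$ output regions a given pixel can land in, one arrives at $\norm{\mathcal{B}}^2\le\rho$; then \eqref{L_f smooth} holds with $\Gamma=\rho$.

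Item \eqref{as:iv} is then easy: $f$ is a continuous (indeed quadratic) function bounded below by $0$, and $\Omega$ is nonempty and compact, so $F=f+\iota_\Omega$ attains its minimum on $\Omega$ by Weierstrass; hence $S\neq\emptyset$. Finally, with Lemma~\ref{l:smooth of f} in hand, the DR algorithm is exactly FISTA with stepsize $1/\Gamma=1/\rho$ applied to \eqref{NDRP} (the proximal map of $\iota_\Omega$ being $P_\Omega$ from Lemma~\ref{l:P_Ocal}), so Theorem~\ref{t:FISTA convergence} applies verbatim and yields the $\bigO{1/k^2}$ rate in Theorem~\ref{t:NDR convergence}.

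I expect the main obstacle to be the sharp constant in \eqref{as:iii}: getting exactly $\Gamma=\rho$ rather than a looser bound like $\Gamma=N$ or $\Gamma=\rho^2$ requires using \emph{both} the $\ell^1$-normalization of each $p_n$ \emph{and} the disjoint-support geometry of the masks carefully, so that cross terms $\inp{(\mu_m\odot x)*p_m}{(\mu_n\odot x)*p_n}$ for $m\neq n$ are controlled by the finite overlap of the $\rho$-neighbourhoods of disjoint strips rather than being bounded trivially. Everything else is routine functional-analytic bookkeeping.
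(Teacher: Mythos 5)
Your proposal is correct, and items (i), (ii) and (iv) coincide with the paper's own argument (indicator of a nonempty closed convex box; $f$ a finite convex quadratic so $\dom(f)=\Ebb$; Weierstrass on the compact set $\Omega$). For item (iii) you take a genuinely different route. Writing $\mathcal{B}$ for the forward operator, the paper first invokes the radial symmetry of the diffraction PSFs to replace $p_n^{\dagger}$ by $p_n$, and then factors the Lipschitz bound into an inner estimate $\norm{\mathcal{B}(x-y)}\le\norm{x-y}$ and an outer estimate $\norm{\mathcal{B}^*r}\le\rho\norm{r}$ (disjointness of the masks makes the outer summands orthogonal, and Cauchy--Schwarz over the $\rho\times\rho$ window together with $\norm{p_n}_2\le\norm{p_n}_1=1$ produces the factor $\rho$). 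You instead bound $\norm{\mathcal{B}}^2\le\rho$ once and use that the Lipschitz constant of $\nabla f$ is $\norm{\mathcal{B}^*\mathcal{B}}=\norm{\mathcal{B}}^2$. Your route buys two things. First, it never needs $p_n^{\dagger}=p_n$, since $\norm{\mathcal{B}^*}=\norm{\mathcal{B}}$ automatically --- relevant because the paper's conclusion explicitly flags the symmetric-kernel assumption as a limitation of its proof of this lemma. Second, your insistence on controlling the cross terms is precisely the point the paper's inner estimate passes over: an inequality of the form $\norm{\sum_m c_m}^2\le\sum_m\norm{c_m}^2$ requires the summands $c_m=(\mu_m\odot(y-x))*p_m$ to be orthogonal, and they are not, because the $\rho$-dilated strips overlap. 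Your overlap count repairs this at no cost to the constant: at most $\lceil(\rho-1)/s\rceil+1\le\rho$ dilated strips meet any output pixel, so Cauchy--Schwarz across the overlapping terms followed by Young's inequality $\norm{u*p_n}\le\norm{u}\,\norm{p_n}_1=\norm{u}$ on each term gives $\norm{\mathcal{B}x}^2\le\rho\sum_n\norm{\mu_n\odot x}^2=\rho\norm{x}^2$, hence $\Gamma=\rho$ exactly as the paper states. Do write that count out explicitly --- it is the only non-routine step and, as you anticipated, the place where a careless bound would degrade to $N$ or $\rho^2$; the rest of your plan, including the reduction of Theorem \ref{t:NDR convergence} to Theorem \ref{t:FISTA convergence} with stepsize $1/\rho$, matches the paper.
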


We can now formulate the desired convergence result for the DR algorithm.

\begin{theorem}[$O(1/k^2)$ - convergence rate of the DR algorithm]\label{t:NDR convergence}
	Let $o^{(k)}$ be a sequence generated by the DR algorithm for solving \eqref{NDRP} with stepsize $\lambda = \rho^{-1}$. Then for any $x^*\in S$,
	\begin{equation*}
		f(o^{(k)}) - F_{\mathtt{opt}} \le \frac{2\rho}{(k+1)^2}\norm{o^{(0)} - x^*} \quad\forall k\ge 1.
	\end{equation*}
\end{theorem}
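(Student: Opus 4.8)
The plan is to read Theorem \ref{t:NDR convergence} as a specialization of the abstract FISTA guarantee of Theorem \ref{t:FISTA convergence}, with every hypothesis supplied by Lemma \ref{l:smooth of f}. First I would record the identification of the NDR problem \eqref{NDRP} with the composite model \eqref{OP}: set $\Ebb := \Rbb^{(l,w)}$, let $f$ be the smooth quadratic in \eqref{OP1}, and let $g := \iota_\Omega$ with $\Omega = [0,1]^{(l,w)}$. By Lemma \ref{l:P_Ocal}, $\mathrm{prox}_{\lambda g}$ coincides with the metric projection $P_\Omega$ for every $\lambda>0$, so one step of \eqref{FISTA:update} with stepsize $\lambda$ is exactly one FISTA step for \eqref{OP}. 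Hence the sequence produced by the DR algorithm (Algorithm \ref{al:NDR}) is the FISTA sequence for \eqref{NDRP}; the only nonstandard ingredient is that $\nabla f$ is evaluated via Algorithm \ref{al:fast gradient}, which returns the same vector as the closed form \eqref{nabla f} (Lemma \ref{l:nabla f}) and therefore leaves the iterates unchanged.

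Next I would invoke Lemma \ref{l:smooth of f} to verify Assumption \ref{ass:convex} for this instance: parts (i)--(ii) give that $\iota_\Omega$ and $f$ are proper, closed and convex with $\dom(\iota_\Omega)=\Omega\subset\intr(\dom f)=\Ebb$; part (iii) gives $\Gamma$-smoothness of $f$ with the explicit constant $\Gamma=\rho$; and part (iv) gives $S\neq\emptyset$, so $F_{\mathtt{opt}}$ is finite and attained. Choosing the DR stepsize $\lambda=\rho^{-1}=1/\Gamma$ then makes Theorem \ref{t:FISTA convergence} directly applicable, and \eqref{O(1/k^2)} yields $F(o^{(k)}) - F_{\mathtt{opt}} \le \tfrac{2\rho}{(k+1)^2}\norm{o^{(0)}-x^*}$ for every $x^*\in S$ and every $k\ge1$.

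It remains to strip the indicator term from the left-hand side. The relevant FISTA iterate is produced as $P_\Omega(\cdot)$, hence lies in $\Omega=\dom\iota_\Omega$, so $\iota_\Omega$ vanishes there and $F$ equals $f$ at that point; likewise every $x^*\in S$ lies in $\Omega$, so $F_{\mathtt{opt}}=f(x^*)$. Substituting gives $f(o^{(k)}) - F_{\mathtt{opt}} \le \tfrac{2\rho}{(k+1)^2}\norm{o^{(0)}-x^*}$, which is the assertion. A minor point to address carefully here is the notational matching between the sequence carried by Algorithm \ref{al:NDR} and the (proximal-gradient) sequence to which the bound of Theorem \ref{t:FISTA convergence} applies, together with the initialization $x^{(0)}=o^{(0)}$.

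As for the difficulty: within the scope of results already available, the theorem is essentially a corollary and its proof is routine assembly. The genuine content sits in Lemma \ref{l:smooth of f}, and I expect the real obstacle to be its part (iii) --- that $\nabla f$ is globally Lipschitz with constant \emph{exactly} $\rho$. Proving this amounts to bounding the operator norm of the self-adjoint linear map $x\mapsto\sum_{n}\mu_n\odot\big(p_n^{\dagger}*((\mu_n\odot x)*p_n)\big)$, which requires using that each $p_n$ is a nonnegative PSF of unit total mass (so that convolution with $p_n$ is a contraction in the appropriate norm), that the masks $\mu_n$ form a partition of the pixel grid, and that the boundary handling in the full-size convolution of Algorithm \ref{al:fast gradient} does not increase the norm; the Zernike structure of $p_n$ in \eqref{p_n} is what guarantees this unit-mass normalization.
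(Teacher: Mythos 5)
Your proposal is correct and follows essentially the same route as the paper: verify Assumption \ref{ass:convex} via Lemma \ref{l:smooth of f} with $\Gamma=\rho$ and then apply Theorem \ref{t:FISTA convergence} with stepsize $1/\Gamma$. The extra details you supply (dropping the indicator term on $\Omega$, the equivalence of Algorithm \ref{al:fast gradient} with the closed-form gradient, and the matching of the iterate sequences) are sensible elaborations of the same argument, not a different approach.
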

\begin{proof}
	Lemma \ref{l:smooth of f} ensures that the NDR problem (\ref{NDRP}) satisfies all the mathematical conditions in Assumption \ref{ass:convex} with $\Gamma = \rho$. The result then follows from Theorem \ref{t:FISTA convergence}. 
\qed
\end{proof}

\section{Simulation results}\label{s:numerical results}

In this section we study important theoretical aspects of the DR algorithm on simulation data including solvability, noise robustness, convergence, model sensitivity and computational complexity.
Since we have not found existing single-frame nonuniform defocus removal algorithms that can be directly applied to the image classification problem considered in this paper, only few comparisons will be done in this section to show the advantages of the DR algorithm over the projected gradient method and the direct implementation of FISTA when relevant. Note that the analysis in Sect. \ref{s:solution algorithm} allows us to implement almost every standard optimization algorithm for the NDR problem under consideration. However, demonstrating the advantages of the DR algorithm over the others is not a goal of this paper as the efficiency of FISTA has been well known.
In the section we make use of the forward imaging model (\ref{forward model}).
Every (continuous) curve plotted in this paper is the linear interpolation of the discrete data being analysed.

\subsection{Solvability analysis}\label{subs:solvability}

\begin{table}[H]
	\caption{Parameters used in solvability analysis}
	\label{tbl:solvability params}
	\centering{
		\begin{tabular}[1\baselineskip]{|c|c|c|c|c|c|c|c|}
			\hline
			$N$ & $s$ & $d$ & $n_0$ & $\rho$ & $l=w$ & $\lambda=t^{(0)}$ & $K$\\ \hline
			141 & 3 & \textbf{0.1 -- 0.45} & 71 & 65 & 423 & 1 & 250\\ \hline
		\end{tabular}\vspace*{.25cm}\\
		$N$ -- \# defocus zones, $s$ -- DoF size (in pixel rows), $d$ -- blur coefficient, $n_0$ -- focal position, $\rho$ -- PSF size, $(l,w)$ -- object size, $\lambda$ -- stepsize,\\
		$t^{(0)}$ -- initial acceleration parameters, and $K$ -- \# iterations.
	}
\end{table}

We study the solvability of the DR algorithm with respect to the degradation level of the input image which is determined by the \emph{blur coefficient}. The parameters used in this experiment are presented in Table \ref{tbl:solvability params}, \textit{cf.} the notions introduced in Sect. \ref{subs:application model} and Sect. \ref{s:solution algorithm}.
The data image is generated according to (\ref{forward model}) and then corrupted with Poisson noise using the \textit{MATLAB imnoise} function. The restoration quality is measured by the relative Root Mean Squares (RMS) error (in \%) of the corrected image compared to the ground truth shown in Fig. \ref{fig:solvability}(a). Numerical experiments with 15 values of blur coefficient ranging from 0.1 to 0.45 and split evenly at 0.025 are conducted. Accordingly, the errors of the input images with noise gradually increase from $10.3\%$ to $20.3\%$ as shown in Fig. \ref{fig:solvability}(b) by the dark curve in comparison with the errors of the corrected images which range from $0.37\%$ to $4.8\%$ (the blue curve).
As expected the restoration error is directly proportional to the blur coefficient as depicted by the upward blue curve. For example, the data image with error $18.5\%$ and its restoration with error $3.2\%$ are respectively shown in figure (c) \& (d) for visualization. We observe that most details of the restored image shown in (d) are visually recognizable.

\begin{figure}[ht]
	\centering
	\includegraphics[width=.43\linewidth]{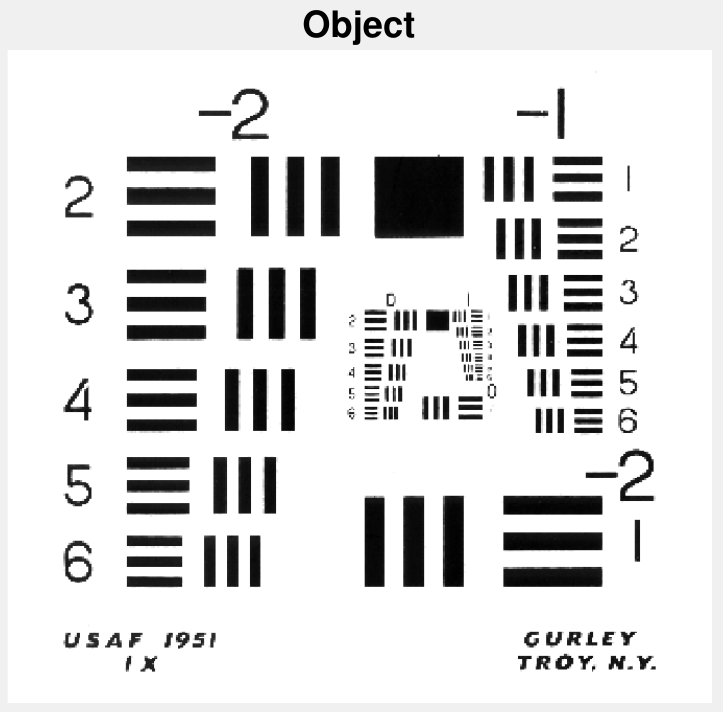}(a)\;
	\vspace*{0.25cm}
	\includegraphics[width=.43\linewidth]{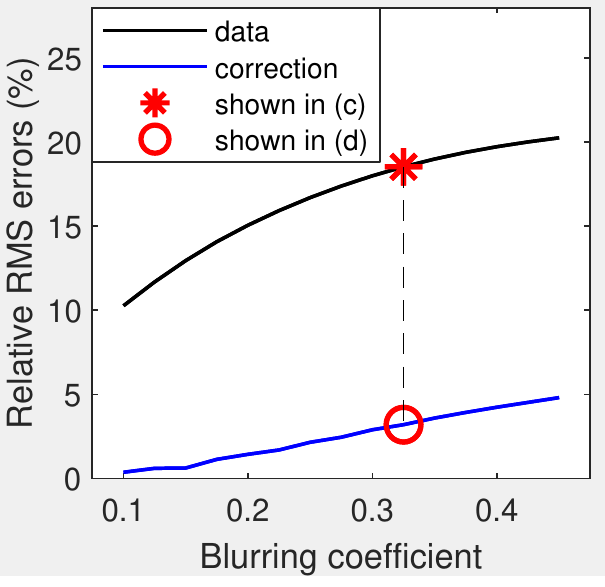}(b)\\
	\includegraphics[width=.43\linewidth]{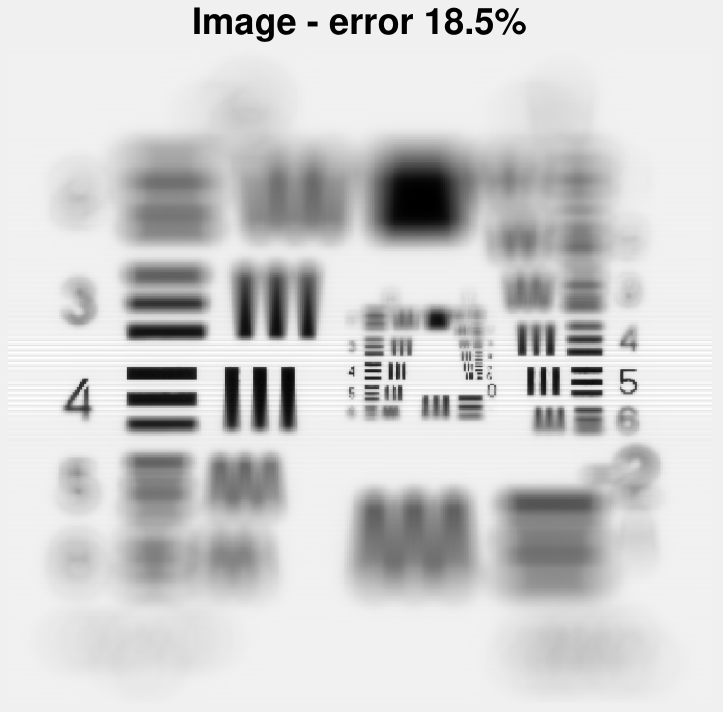}(c)\;
	\includegraphics[width=.43\linewidth]{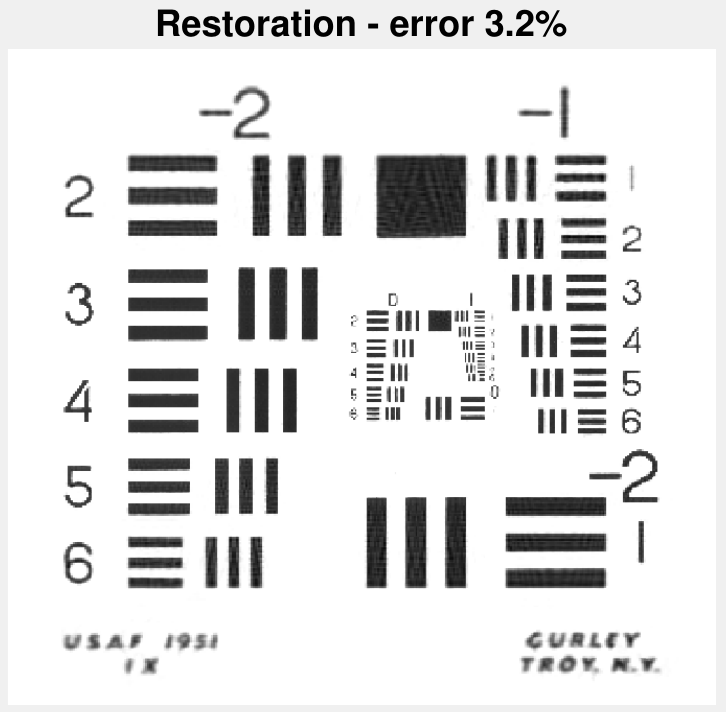}(d)
	\caption{Solvability analysis of the DR algorithm with respect to the degradation level of the data image measured by the \textit{blur coefficient}: (a) the ground true object; (b) the restoration errors (the blue curve) are compared to errors of the input images (the dark curve) for blur coefficients ranging from 0.1 to 0.45. The restoration error is directly proportional to the blur coefficient as depicted by the upward blue curve. (c) The input image with error $18.5\%$ versus (d) the corrected image with visible details (error $3.2\%$).}
	\label{fig:solvability}
\end{figure}

\subsection{Noise analysis}\label{subs:noise}

We analyse the robustness of the DR algorithm with respect to noise. The parameters used this experiment are as in Table \ref{tbl:solvability params} except with blur coefficient $d=0.1$. Numerical experiments with additive white Gaussian noise with eleven signal-to-noise ratios (SNR) ranging from 30 to 60 decibels (dB) are conducted. The restoration is terminated either after 150 iterations or whenever the estimate quality in terms of relative RMS drops by a threshold ($10^{-4}$ is taken). In this section we also compare the DR algorithm with the projected gradient (PG) method.
The results are summarized in Fig. \ref{fig:noise_analysis}(a) where the error of the corrected image is plotted versus the SNR. For SNR increasing from 30 to 60dB, the restoration error of the DR algorithm (the blue curve) gradually reduces from 6.4\% to 0.9\%.
It is worth mentioning that the higher error of PG (the red curve) compared to DR does not mean that the former is less robust than the latter.
This is rather due to the fact that PG would require much more iterations to reach the corresponding accuracy level of DR, see Sect. \ref{subs:convergence} for more details about their convergence speed.
Fig. \ref{fig:noise_analysis}(b) shows a section of the data\footnote{This section exhibits the most defocus in the image.} and the corresponding ones corrected by PG (the middle column) and DR (the last column) for six SNRs.
For SNR from 36dB the restorations by both methods are visually recognizable and the one by DR is obviously more visible than the one by PG.

\begin{figure}[h]
	\centering
	\includegraphics[width=.43\linewidth]{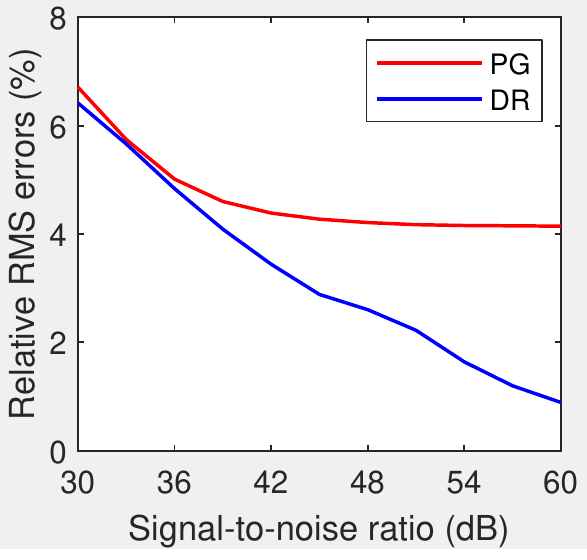}(a)\;
	\includegraphics[width=.43\linewidth]{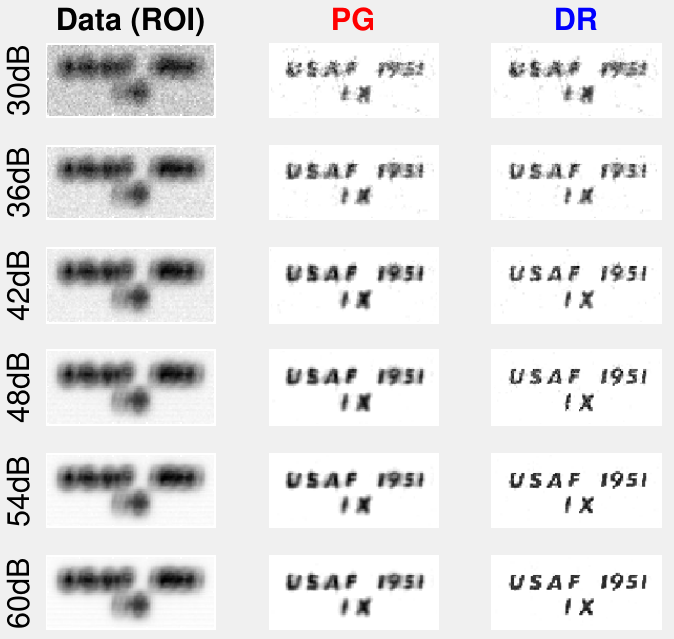}(b)
	\caption{Noise analysis of the DR algorithm with a comparison to the projected gradient (PG) method. (a) The correction errors for different levels of Gaussian noise with SNR ranging from 30 to 60dB are plotted. The error of DR (the blue curve) gradually reduces from 6.4\% to 0.9\% which is lower than that of PG (the red curve). 
		(b) A section of the data (the left column) and the corresponding ones corrected by PG (the middle column) and DR (the last columns) for six particular noise levels are shown for visual comparison. For SNR from 36dB the images corrected by both methods are visually recognizable and the one by DR is clearer.}
	\label{fig:noise_analysis}
\end{figure}

\subsection{Convergence properties}\label{subs:convergence}

In this section convergence properties of the DR algorithm are numerically demonstrated and also compared to those of the PG method. The parameters used in this experiment are as in Table \ref{tbl:solvability params} except with $d=0.1$ and $K=2500$. The data is corrupted with Poisson noise.
The numerical results are summarized in Fig. \ref{fig:convergence} in which the $y$-axis quantity is shown in the logarithmic scale for clarity.
The iterative change of the temporal estimate is shown in Fig. \ref{fig:convergence}(a) where the slopes of the tailing parts of the curves show linear convergence of the two algorithms. The DR algorithm (the blue curves) converges much faster than the PG method (the red curves) as expected. The restoration error in iteration is plotted in Fig. \ref{fig:convergence}(b) where its steady decrease shows that the convergence of both algorithms is towards stable and meaningful solutions. DR is also more accurate than PG, for example, to achieve a restoration error $<1\%$, 2500 iterations of PG is needed while that of DR is only about 100. The restoration error of 2500 DR iterations is about 130 times smaller than the one of PG.
For the sake of brevity, we chose to skip the numerical results in the noiseless setting which consistently show linear convergence of the DR algorithm to the correct solution.

\begin{figure}[ht]
	\centering
	\includegraphics[width=.425\linewidth]{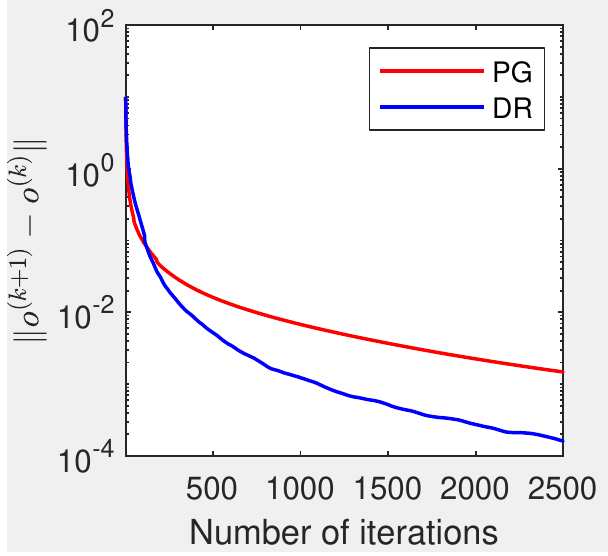}(a)\;
	\includegraphics[width=.43\linewidth]{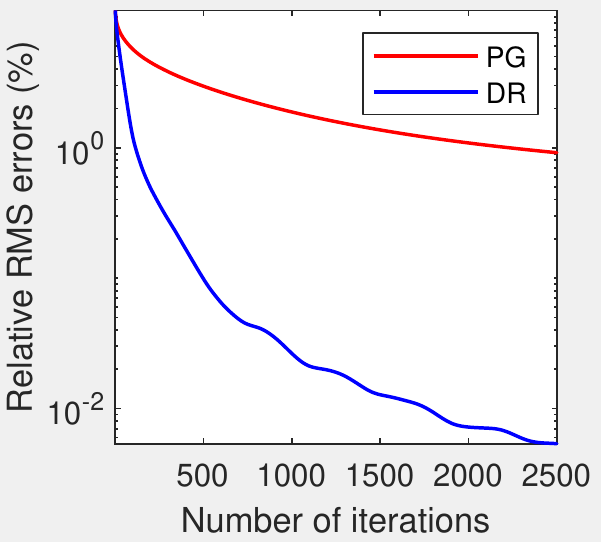}(b)
	\caption{The change (a) and the error (b) of the temporal estimate by the DR algorithm (the blue curves) are plotted in comparison with the PG method (the red curves). Figure (a) shows that DR converges faster than PG and the convergence of both methods is linear as indicated by the slopes of the tailing parts of the curves. Steady decrease of the restoration error in figure (b) shows that convergence of the algorithms is towards stable and meaningful solutions. DR is faster and more accurate than PG, for example, to obtain an estimate with error $<1\%$, 2500 iterations of PG is needed while that of DR is about 100.}
	\label{fig:convergence}
\end{figure}

\subsection{Model sensitivity}\label{subs:sensitivity}

Throughout this paper the \textit{focal position} and the \textit{blur coefficient} of the data image are assumedly given by the camera specifications and photographic settings, see Sect. \ref{subs:application model}. However, they may not be known precisely in practice.
In this section we analyse the sensitivity of the DR algorithm with respect to these parameters. The parameters used in this experiment are as in Table \ref{tbl:solvability params} except with blur coefficient $d=0.125$. The data is also corrupted with Poisson noise.

\textit{Focal position.} The generated image is respectively corrected using the DR algorithm but with nine levels of \textit{focal offset} uniformly ranging from $-2$ to $+2$ (DoF).
The numerical results are summarized in Fig. \ref{fig:model sensitivity}(a) where the restoration error is plotted versus the offset level.
The error is approximately linear to the offset level in both directions as depicted by the two almost straight branches of the curve.
The smallest error $0.6\%$ corresponds to the case without offset while the largest residual about $4.45\%$ occurs to the one with the most offset.

\begin{figure}[ht]
	\centering
	\includegraphics[width=.43\linewidth]{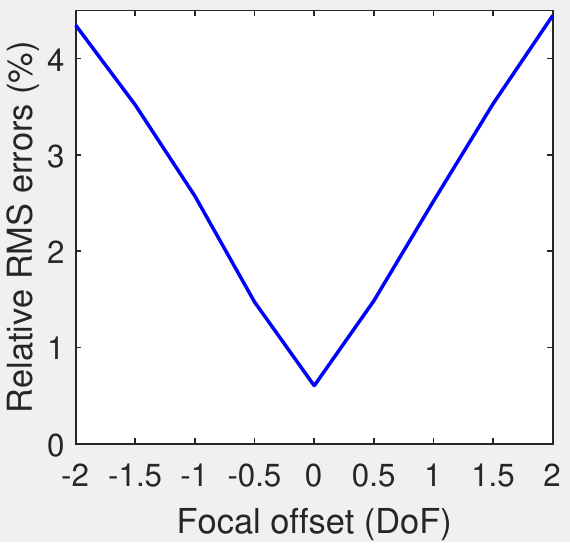}(a)\;
	\includegraphics[width=.43\linewidth]{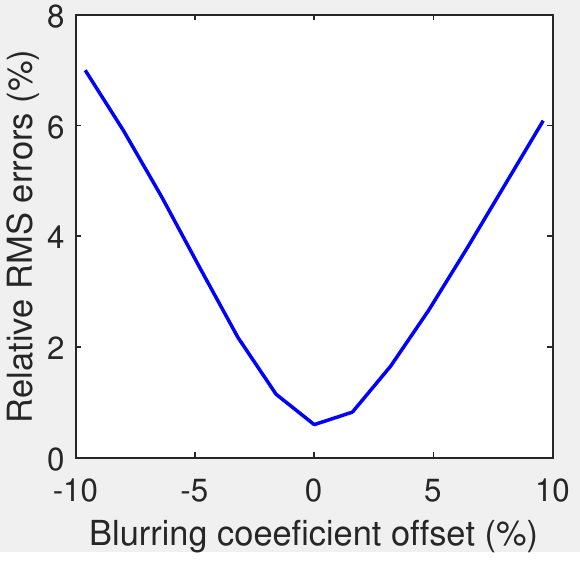}(b)
	\caption{(a) Sensitivity analysis of the DR algorithm with respect to the focal position: the restoration error is plotted versus the focal offset ranging from $-2$ to 2 (DoF). The error is approximately linear to the offset level in both directions. (b) Sensitivity analysis of the DR algorithm with respect to the blur coefficient: the restoration error is plotted versus the blur coefficient offset ranging from $-9.6\%$ to $9.6\%$. The restoration quality deteriorates proportionally relative to the offset level in both directions.}
	\label{fig:model sensitivity}
\end{figure}

\textit{Blur coefficient.} The generated image is respectively corrected using the DR algorithm but with 13 levels of blur coefficient offset uniformly ranging from $-0.012$ to $+0.012$, accordingly from $-9.6\%$ to $+9.6\%$.
The numerical results are summarized in Fig. \ref{fig:model sensitivity}(b) where the restoration error is plotted versus the offset level. The restoration quality deteriorates proportionally relative to the offset level in both directions.
The smallest error $0.6\%$ corresponds to the case without offset while the largest error about $7\%$ occurs to the one with the most offset.

In view of Fig. \ref{fig:solvability}(b), relative RMS error about $3.2\%$ can be considered as acceptable with respect to the eyeball metric. According to this criterion, we can conclude from the above analysis (summarized in Fig. \ref{fig:model sensitivity}) that the DR algorithm is robust with respect to focal offset up to 1.5 DoFs and blur coefficient offset up to $5\%$. In Sect. \ref{subs:exp_params} we propose numerical approaches for estimating these parameters using guide-star objects. The numerical results with experimental data in Sect. \ref{subs:exp_results} also confirm the robustness of the DR algorithm with respect to these parameters.

\subsection{Computational efficiency of DR}\label{subs:complexity}

We briefly report the advantage in computing time of the DR algorithm over the direct implementation of FISTA (Algorithm \ref{al:FISTA}). Note that evaluating the computational complexity of these algorithms in terms of flopping counts or else is not a goal of this section though this task is rather straightforward. The parameters used in this experiment are as in Table \ref{tbl:solvability params} except with blur coefficient $d=0.125$. Additionally taking the sparsity of the masks $\mu_n$ into account, the DR algorithm is about 4.7 times faster than Algorithm \ref{al:FISTA}.

\section{Experimental results}\label{s:exp_data}

In this section we aim at experimentally validating the forward imaging model (\ref{forward model}) of the nonuniform defocus removal problem considered in this paper and demonstrating the practicability of the proposed solution algorithm.

\subsection{Experiment setup}\label{subs:exp_setup}

We first describe the imaging setup of our experiment which has been done as simple and independent of the camera specifications as possible. A text object on a piece of paper is imaged by a camera with optical axis not perpendicular to the paper plane.
To make it convenient for partitioning the pictorial object into isoplanatic zones, before taking the image we manually rotated the camera around its axis so that each column of the object (approximately) corresponds to a single defocus value. This adjustment is equivalent to rotating the obtained image around its centre at the same angle but with opposite direction. In the current experiment the rotation angle can be numerically estimated as explained in Sect. \ref{subs:exp_params} below, however, adjusting the camera in advance helps avoid possible inexactness of numerically rotating the image posteriorly.

\begin{figure}[ht]
	\centering
	\includegraphics[width=0.75\linewidth]{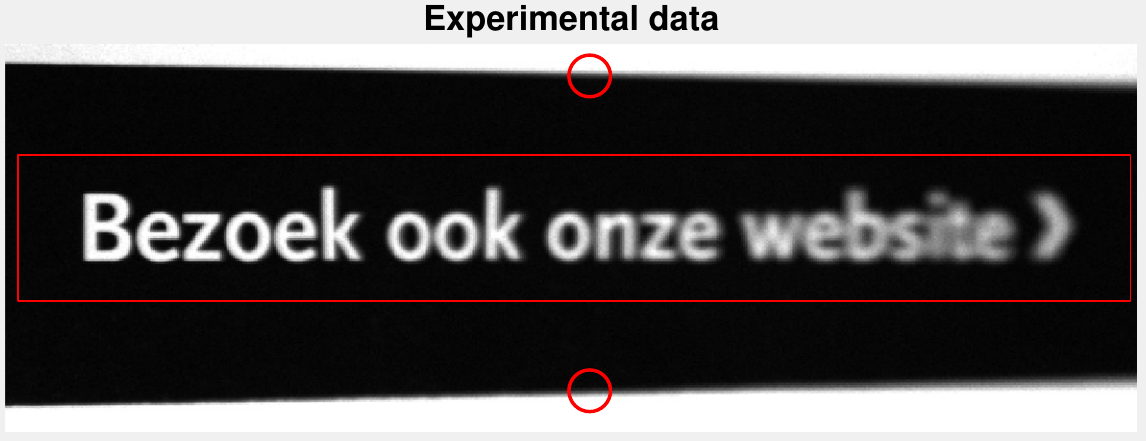}
	\caption{The experimental data consists of the image of the ROI in the central part marked by the rectangle and the \textit{guide-stars} marked by the red circles which are the images of two parallel straight sharp edges.}
	\label{fig:exp_img}
\end{figure}

The experimental data is shown in Fig. \ref{fig:exp_img} where the region of interest (ROI) is the text in its central part marked by the red rectangle. The text object is intentionally chosen so that the ROI is additionally accompanied with two parallel straight sharp edges whose images marked by the red circles in the upper and lower parts of Fig. \ref{fig:exp_img} are used as the \textit{guide-stars} of the imaging setup. As detailed in the next section, they enable us to estimate the parameters of the imaging model (\ref{forward model}) without requiring the knowledge of the camera specifications and photographic settings.
The idea of making use of the \textit{a priori} known \textit{salient edges} for such parameter estimation purposes can be found, \textit{e.g.}, in \cite{Dong2017, LiuYanZen19}.
According to the terminologies in \cite{LiuYanZen19}, the problem with experimental data studied in this section can be termed as \textit{surface-aware} blind nonuniform defocus removal.

\subsection{Estimate of physical parameters}\label{subs:exp_params}

In order to match the experimental data with the imaging model (\ref{forward model}), we need to determine the focal position and the blur coefficient, \textit{cf.} Sect. \ref{subs:application model}. Though these physical parameters are technically known from the camera specifications and the tilt angle of the camera axis relative to the object plane, they are unfortunately not at our disposal. As an alternative, we propose \textit{ad-hoc} numerical approaches for estimating them from the two \textit{guide-stars} in the data.

\textit{Focal position.} Two in-focal points are sufficient for determining the in-focal section of the pictorial object, which is the intersection of the focal and the object planes. The idea is to find one in-focal point along each of the guide-stars. 
Since the camera has been adjusted so that isoplanatic sections of the object are vertical as explained in Sect. \ref{subs:exp_setup}, we only need to do that for one of the guide-stars (\textit{e.g.}, the lower one).  
More specifically, let the lower guide-star be the rectangle region of size $30\times 580$ pixels proportionately containing the lower edge shown in Fig. \ref{fig:exp_img}. To estimate the focal position along this guide-star, we compute its gradient as a measure of its sharpness. As the optical aberration of the camera can be neglected, the guide-star as the image of a straight homogeneous sharp edge should exhibit its highest sharpness at in-focal points. According to this criterion, the plot of the highest sharpness of individual columns of the guide-star (the cyan curve) in Fig. \ref{fig:exp_slope and df}(a) indicates that the focal is around the 40th column which is marked by the blue rectangle in the figure. This yields the focal position $n_0 \approx 8$ (in DoF) if the DoF size $s=5$ is chosen as what we do in the next section.

\textit{Blur coefficient.} The variation of the sharpness of the (lower) guide-star is mutually related to the blur coefficient of the imaging setup: the larger the blur coefficient, the greater the variation (around the focal).
This observation leads to the following \emph{ad-hoc} grid search for estimating the blur coefficient.
First, the sharpness variation of the guide-star is measured by the \emph{average slope} of the sharpness curve via linear approximation plotted by the red line in Fig. \ref{fig:exp_slope and df}(a). The sharpness curve is truncated between columns $40$ (around the focal position) and $500$ to reduce the influence of noise.
Next, a simulated straight sharp edge is passed through the imaging model (\ref{forward model}) with respectively 17 different values of blur coefficient ranging from 0.01 to 0.05 uniformly separated by 0.0025. The DoF size $s=5$ (pixels) is empirically chosen.
The sharpness variations of the obtained 17 images are then computed in the same way as for the one of the guide-star. The plot in Fig. \ref{fig:exp_slope and df}(b) confirms that the larger the blur coefficient, the greater the variation. The sharpness variation of the guide-star, marked in red, provides an \textit{ad-hoc} estimate of the blur coefficient about 0.0296. It is worth mentioning that for this experiment the DR algorithm is highly robust with respect to both the focal position and the blur coefficient.

\begin{figure}[ht]
	\centering
	\includegraphics[width=.42\linewidth]{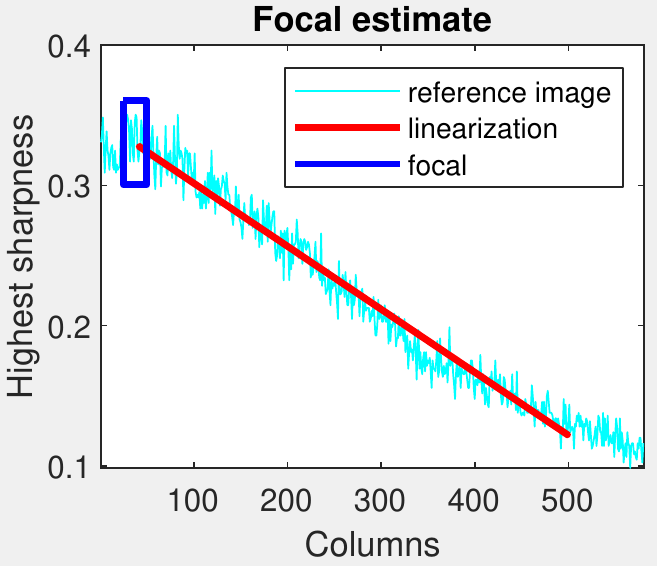}(a)\;\;
	\includegraphics[width=.435\linewidth]{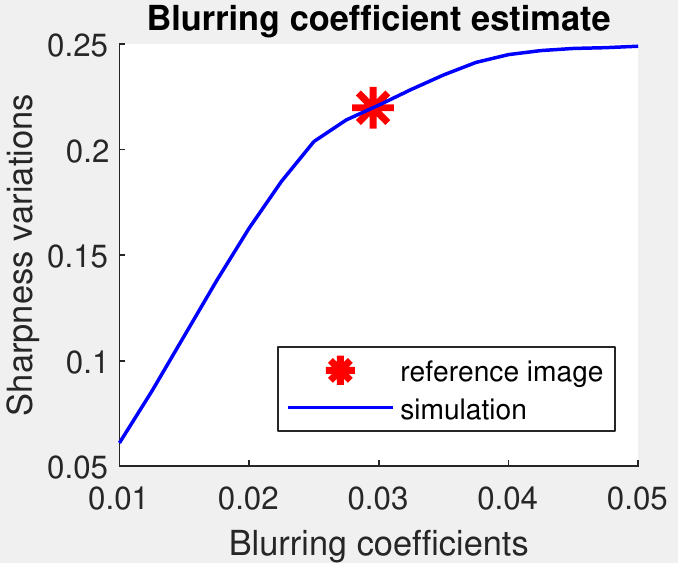}(b)
	\caption{(a) The in-focal point along the lower \textit{guide-star} is estimated from its sharpness. The cyan curve shows the highest sharpness of each column of the guide-star.
		The guide-star as the image of a straight homogeneous sharp edge would exhibit its highest sharpness at the in-focal point (around column 40) marked by the blue rectangle. The \emph{blur coefficient} of the imaging setup is estimated from the sharpness variation of the guide-star measured by the average slope of the sharpness curve (in cyan) via linear approximation (the red line). (b) The plot summarizes the \textit{ad-hoc} grid search over 17 different values of blur coefficient versus the sharpness variation levels: the larger the blur coefficient, the greater the variation. The sharpness variation of the guide-star marked in red provides an estimate of the blur coefficient about 0.0296.}
	\label{fig:exp_slope and df}
\end{figure}

\subsection{Image processing and results}\label{subs:exp_results}

\begin{table}[h]
	\caption{Parameters used in the experiment of Sect. \ref{s:exp_data}}
	\label{tbl:exp params}
	\centering{
		\begin{tabular}[1\baselineskip]{|c|c|c|c|c|c|c|c|c|c|c|}
			\hline
			$N$ & $s$ & $d$ & $n_0$ & $\rho$ & $l$ & $w$ & $\tau$ & $\lambda$ & $t^{(0)}$ & $K$\\ \hline
			116 & 5 & 0.0296 & 8 & 23 & 90 & 580 & 0.11 & 1 & 1 & 10\\ \hline
		\end{tabular}\vspace*{.25cm}
	}
	$N$ -- \# defocus zones, $s$ -- DoF size (in pixels), $d$ -- blur coefficient,\\
	$n_0$ -- focal position, $\rho$ -- PSF size, $(l,w)$ -- object size, $\tau$ -- data threshold,\\
	$\lambda$ -- stepsize, $t^{(0)}$ -- initial acceleration parameters, and $K$ -- \# iterations.
\end{table}

Let us first explain more quantitative details of the numerical experiment.
The ROI (in the object) is assumed to have the same size as its image shown in Fig. \ref{fig:exp_img} which is $90\times 580$. The depth of field $s=5$ is taken.
That is, the ROI consists of 116 equal vertical zones each consisting of 5 adjacent columns. Following the analysis in Sect. \ref{subs:exp_params}, we set the required physical parameters as follows: the 8th zone is in focal and the blur coefficient is 0.0296. It is more intuitive to conceive the blur coefficient via the corresponding generated PSFs, for example, the one with most defocus for the rightmost zone of the ROI is shown in Fig. \ref{fig:exp_his and psf}(a). The ROI is restored by 10 iterations of the DR algorithm with stepsize $1$ and the initial acceleration parameter $1$. The parameters used in this experiment are also summarized in Table \ref{tbl:exp params}.

\begin{figure}[H]
	\centering
	\ \ \includegraphics[width=.395\linewidth]{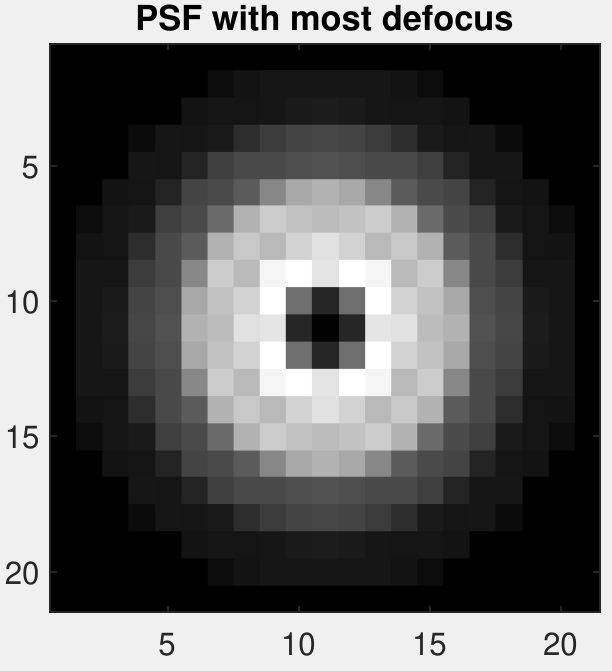}(a)\;\;
	\includegraphics[width=.4\linewidth]{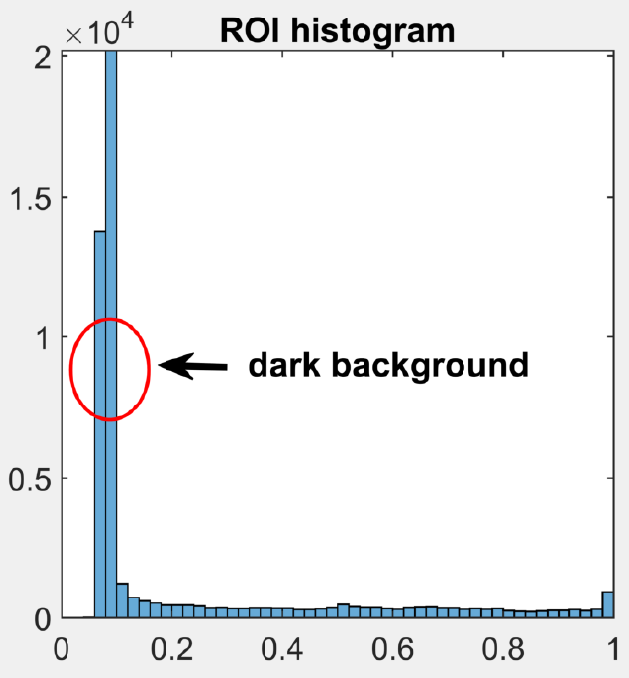}(b)
	\caption{(a) To intuitively illustrate the blur coefficient in the experiment data, the PSF for the rightmost zone of the ROI generated using the physical parameters given by Sect. \ref{subs:exp_params} is shown. (b) The histogram of the image of the ROI reveals the information about the background intensities of the ROI (most of its pixels are almost dark). This knowledge is helpful for preprocessing the experimental data, \textit{e.g.}, thresholding the data at 0.11 results in better image restoration (without artifacts) as shown in Fig. \ref{fig:exp_est}(c) compared to Fig. \ref{fig:exp_est}(b).}
	\label{fig:exp_his and psf}
\end{figure}

Due to noise and inevitable model deviations, direct use of the experimental data results in undesired artifacts in the restored image as shown in Fig. \ref{fig:exp_est}(b). These artifacts do not spoil down the overall improvement of the corrected image compared to the data in Fig. \ref{fig:exp_est}(a), but they are still visible in the right most area of Fig. \ref{fig:exp_est}(b). To address this issue we make use of the information given by the image histogram shown in Fig. \ref{fig:exp_his and psf}(b) which indicates that the background intensities of the ROI fall in the interval $[0.06,0.1]$ and the data is quite noisy. This gives us a hint for thresholding the data at 0.11 (manually and empirically chosen). This simple manipulation indeed solves the artifact issue as shown in Fig. \ref{fig:exp_est}(c) where the restoration with the threshold data no longer exhibits distortion effects as opposed to Fig. \ref{fig:exp_est}(b). The improvement in image quality of the correction in Fig. \ref{fig:exp_est}(c) compared to the data in Fig. \ref{fig:exp_est}(a) is clearly observable. This proves both the physical relevance of the NDR problem (\ref{NDRP}) and the practicability of the DR algorithm.

\begin{figure}[H]
	\centering
	\ \ \includegraphics[width=.75\linewidth]{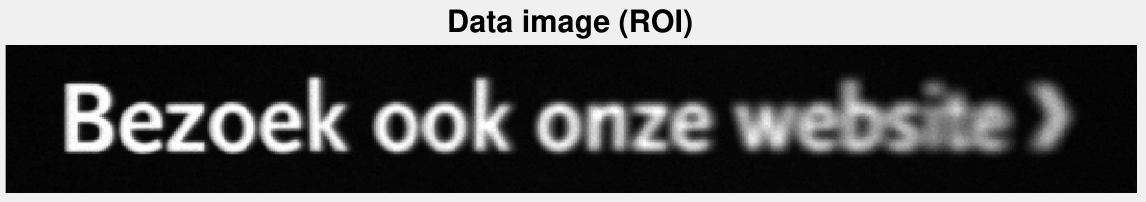} (a)\\
	\ \\
	\ \ \includegraphics[width=.75\linewidth]{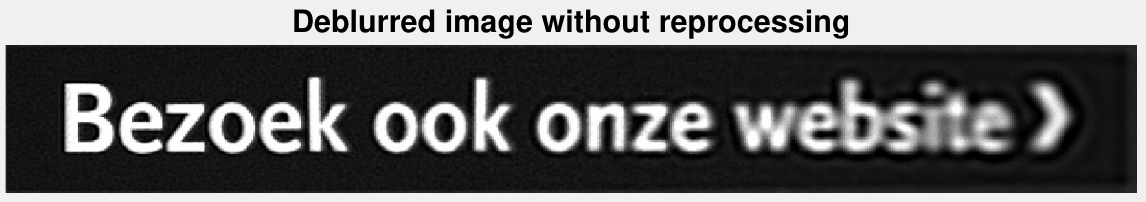} (b)\\
	\ \\
	\ \ \includegraphics[width=.75\linewidth]{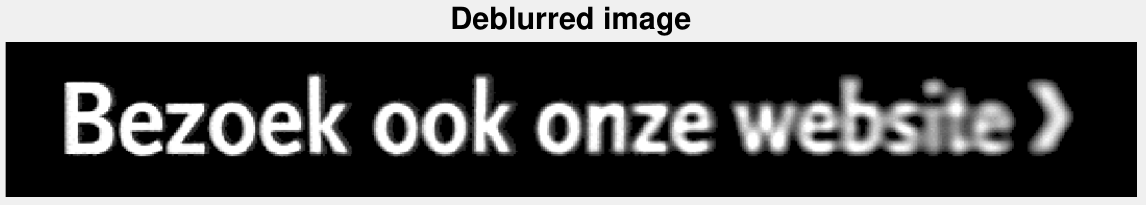} (c)
	\caption{(a) The image of the ROI from the experimental data. (b) The restored ROI with \emph{unprocessed} data suffers artifacts which are slightly visible in its rightmost area. (c) The restored ROI with the data threshold at 0.11 no longer exhibits artifacts compared to (b). The improvement of the correction (c) compared to the data (a) proves both the physical relevance of the NDR problem (\ref{NDRP}) and the practicability of the DR algorithm.}
	\label{fig:exp_est}
\end{figure}

\section{Application to image classification}\label{s:image classification}

In this section we demonstrate that the proposed method for nonuniform defocus removal has potential for practical applications by considering the image classification problem using convolutional neural networks.
The research question is that having a reliable classification neural network trained with standard (undistorted) images, whether or not we can efficiently classify distorted images caused by nonuniform defocus. For example, we want to apply a certain fault detection neural network to similar manufacturing lines whose computer vision system, however, has different perspective geometry.
Recall that our approach to this problem is to introduce an additional image preprocessing step without modifying the given classification neural network.

\begin{table}[h]
	\caption{Parameters used to generate the distorted test datasets B(0.2) and B(0.3) according to the forward model (\ref{forward model})}
	\label{tbl:distorted_test_cifar10_params}
	\centering{
		\begin{tabular}[1\baselineskip]{|c|c|c|c|c|c|c|c|c|c|c|c|}
			\hline
			$N$ & $s$ & $d$ & $n_0$ & $\rho$ & noise\\ \hline
			64 & 1 & 0.2 \& 0.3 & 16 & 9 & Poisson\\ \hline
		\end{tabular}\vspace*{.25cm}\\
		$N$ -- \# defocus zones, $s$ -- DoF size, $d$ -- blur coefficient,\\
		$n_0$ -- focal position, and $\rho$ -- PSF size.
	}
\end{table}

\begin{table}[ht]
	\caption{Accuracy rates of the considered image classification neural network applied to the original test dataset (undistorted images), the distorted ones B(0.2) \& B(0.3) and the corrected ones using the DR algorithm C(0.2) \& C(0.3)}
	\label{tbl:test_cifar10_summary}
	\centering{
		\begin{tabular}[1\baselineskip]{|c|c|c|c|c|c|}
			\hline
			\textbf{Test dataset} & Original & B(0.2) & C(0.2) & B(0.3) & C(0.3) \\ \hline
			\textbf{Accuracy rate} (\%) & 70.09 & 50.67 & \textbf{65.35} & 42.24 & \textbf{56.46} \\ \hline
		\end{tabular}\vspace*{.25cm}\\
		Accuracy rates for C(0.2) and C(0.3) are 65.35\% and 56.46\% which are 14.68\% and 14.22\% higher than the ones for B(0.2) and B(0.3), respectively.
	}
\end{table}

We study the CIFAR-10 dataset consisting of 60 thousands colour images of size $32\times 32\times 3$ in ten classes of objects with 6 thousands images in each class \cite{cifar10}.
The dataset is mutually exclusively divided into 50 thousands training images and 10 thousands test ones.
In this example we make use of the convolutional neural network architecture consisting of three convolutional, two max pooling, one flatten and two fully connected layers built and documented by TensorFlow.
The classification algorithm is obtained by training the network for ten epochs each with batch size 32.
The accuracy of the obtained neural network applied to the 10 thousands test images, which we referred to as the \textit{original} test dataset in the sequel, is about 70.09\%.\footnote{This number as well as all presented in Table \ref{tbl:test_cifar10_summary} are slightly variant for each running since the images used in training the network (ten epochs) are randomly chosen from the training dataset. However, the overall performance of the algorithm over the considered datasets is consistent.}

We now assume that we want to use this neural network to classify images which are distorted by nonuniform defocus.
For this analysis we first create simulation test datasets by passing each of the 10 thousands test images through the forward imaging model (\ref{forward model}) with the technical parameters summarized in Table \ref{tbl:distorted_test_cifar10_params}.
The original test images are padded with constant boundary values to size $64\times 64\times 3$ (\textit{i.e.}, with padding size 16).
The \textit{full-size} images generated by (\ref{forward model}) with PSF size $9\times 9$ having size $72\times 72\times 3$ are then corrupted with Poisson noise and the $(32\times 32\times 3)$-central parts of the obtained images are taken as the distorted images.
Using this routine, we generate two sets of distorted images with blur coefficients $0.2$ and $0.3$, which are denoted by B(0.2) and B(0.3), respectively. Ten realizations of each are shown in Fig. \ref{fig:CNN_images_1} 
(the 2nd \& 4th columns).
As expected, direct application of the classification network to the distorted images is significantly less accurate.
As shown in Table \ref{tbl:test_cifar10_summary} the accuracy rates of classifying B(0.2) and B(0.3) are respectively 50.67\% and 42.24\% compared to 70.09\% for the original test dataset.

In the remainder of this section we will demonstrate that the DR algorithm can be used to correct the distorted images and hence improve the accuracy of the classification neural network.
Each element of B(0.2) and B(0.3) is corrected by 20 iterations of the DR algorithm with stepsize $1$ and initial acceleration step $1$.
Technically, the $(64\times 64\times 3)$-central part of each \textit{full-size} image generated by (\ref{forward model}) and corrupted by Poisson noise (as described in the previous paragraph) is taken as the input of the DR algorithm.
Note that such input image suffers both noise and information loss due to image boundary cut-off.
The $(32\times 32\times 3)$-central part of the image restored by DR is then taken as the corrected image.
Corresponding to B(0.2) and B(0.3) we obtained the two datasets of corrected images respectively denoted by C(0.2) and C(0.3).
Ten realizations of each are shown in Fig. \ref{fig:CNN_images_1} (the 3rd and last columns) for visual comparison to the distorted datasets as well as the original one.

\begin{figure}[H]
	\centering
	\includegraphics[width=.9\linewidth]{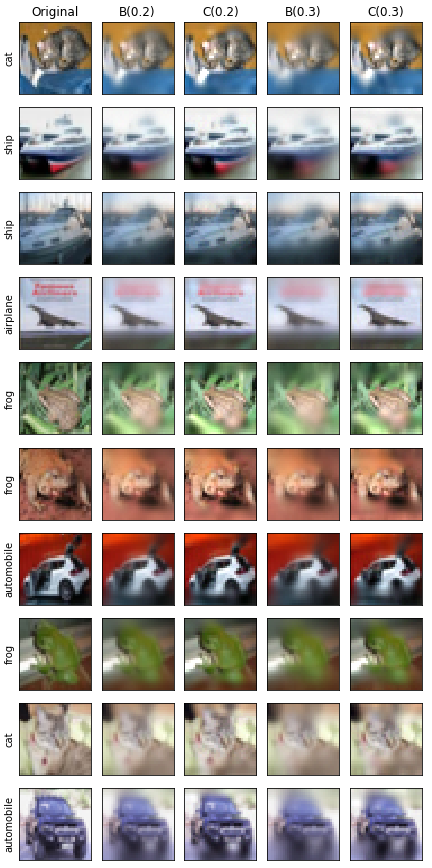}
	\caption{Realizations of various test datasets are shown for visual comparison: the 1st column -- the original test dataset with classification accuracy rate 70.09\%;\; the 2nd \& 4th columns -- the distorted ones B(0.2) and B(0.3) with rates 50.67\% and 42.24\%;\; the 3rd \& last columns -- the ones corrected using the DR algorithm C(0.2) and C(0.3) with rates \textbf{65.35\%} and \textbf{56.46\%}. The improved accuracy rates for C(0.2) and C(0.3) compared to the ones for B(0.2) and B(0.3) (with 14.68\% and 14.22\% higher, respectively) shows the effectiveness of the proposed image correction step and hence promising potential of the DR algorithm for practical image classification.}
	\label{fig:CNN_images_1}
\end{figure}

With respect to the eyeball metric, the visual quality of C(0.2) and C(0.3) is higher than that of B(0.2) and B(0.3) as can be clearly observed from Fig. \ref{fig:CNN_images_1}.
For the considered classification neural network, the accuracy rates for C(0.2) and C(0.3) are 65.35\% and 56.46\% which are respectively 14.68\% and 14.22\% higher than the ones for B(0.2) and B(0.3) as summarized in Table \ref{tbl:test_cifar10_summary}.
This improvement shows the effectiveness of the additional image correction step and hence promising potential of the DR algorithm for practical image classification.

\section{Conclusion}\label{s:conclusion}

We have investigated the \textit{single-frame nonuniform deblurring} problem associated with image classification using machine learning algorithms, named the NDR problem, and proposed the solution method called the DR algorithm for solving it. Important features of DR including solvability, noise robustness, convergence, model insensitivity and computational efficiency were demonstrated.
Physical relevance of the NDR problem and the practicability of the DR algorithm were tested on experimental data.
We also demonstrated that the proposed solution approach for nonuniform defocus removal indeed works for the target application in image classification which originally motivated the investigation of the NDR problem in this paper.

Several meaningful extensions of the NDR problem and the solution approach for it should be investigated in future research. First, the optical aberration of the camera is currently ignorable, see Sect. \ref{subs:NDB}. The question is how the solution algorithm should be modified for the opposite scenario, where the blurring kernels are in general not symmetric, \textit{cf.} the proof of Lemma \ref{l:smooth of f}.
Second, the three-dimensional geometric shape of the object has not been considered since it is unavailable in applications such as image classification. For applications with \textit{a priori} known reference, for example, in detecting errors/faults during the production of three-dimensional objects, it is meaningful to remove also the nonuniform defocus effects inherent from the object shape.
Third, imaging parameters such as \textit{focal position} and \textit{blur coefficient} currently assumed to be known (see Sect. \ref{s:problem formulation}) are often imprecise in practice. It would be meaningful if they could also be corrected by the deblurring algorithm.

\section*{Appendices}
\subsection*{Proof of Lemma \ref{l:nabla f}}\label{a:nabla f}
\begin{proof}
	Given the PSFs $p_n$ and the masks $\mu_n$ ($n=1,2,\ldots,N$), let us define the mapping $L:\mathbb{E}\to \mathbb{E}$ given by
	\[
	L(x) := \sum_{n=1}^{N}(\mu_n\odot x)*p_n\quad \forall x\in \mathbb{E}.
	\]
	Then $L$ is a linear operator as both the convolution and the elementwise multiplication operations are linear. Consequently, the function $f$ defined in (\ref{OP1}) is quadratic with respect to $x$ and thus it is differentiable everywhere with the gradient given by
	\begin{equation}\label{f2L}
		\nabla f(x) = \nabla \paren{\frac{1}{2}\norm{i-L(x)}^2} = L^*\paren{i-L(x)},\; \forall x\in \mathbb{E},
	\end{equation}
	where $L^*$ is the adjoint operator of $L$ which by elementary algebra manipulations is given by
	\begin{equation}\label{L*}
		L^*(y) = \sum_{n=1}^{N}\mu_n \odot (p_n^\dagger * y)\quad\forall y\in \mathbb{E},
	\end{equation}
	where $p_n^{\dagger}$ denotes the reflection of $p_n$ via its origin, \textit{i.e.},
	\begin{equation*}
		p_n^{\dagger}(\xi,\eta) := p_n(-\xi,-\eta)\quad (n=1,2,\ldots,N).
	\end{equation*}
	The combination of (\ref{f2L}) and (\ref{L*}) yields the gradient formula (\ref{nabla f}) as claimed.
\qed
\end{proof}

\subsection*{Parallel computing of $\nabla f$}\label{a:parallel computing}

As the convolution kernels are spatially varying, the evaluation of the gradient in Lemma \ref{l:nabla f} is computationally highly expensive and becomes a tremendous burden on implementing the gradient-type algorithms for practical applications.
Fortunately thanks to the rapidly increasing capacity of computational hardware enabling one to effectively handle larger-scale problems, the aforementioned drawback in terms of computational complexity can be overcome by exploiting parallel structure of the gradient.

The idea of parallel computing for convolution kernels is to rewrite the computationally expensive terms in (\ref{nabla f}) as multiplications of tensors and it is widely known that the latter mathematical operations can be computed much more efficiently with GPU implementation.

We first represent a sequence of convolution operations $x_n * p_n$ as a tensor-multiplication operation ($n=1,2,\ldots,N$).
Recall that the size of all the convolution kernels $p_n$ is $(\rho,\rho)$ and the size of the DoF zones $x_n$ is $(s,w)$. For each channel $n=1,2,\ldots,N$, the $x_n$ is symmetrically padded with zeros to yield the matrix of size $(s+2\rho-2)\times (w+2\rho-2)$.
By the linearity of the convolution operator, the obtained matrix can be one-to-one rewritten as the matrix $X_n$ of size $(s+\rho-1)(w+\rho-1)\times \rho^2$ satisfying the following equality:
\[
X_n \times \vecto(p_n) = \vecto(s_n:=x_n*p_n),
\]
where $\times$ is the matrix-multiplication operation and $\vecto(\cdot)$ is the vectorization operator.

One can stack all the transformed matrices $X_n$ to form a 3-order tensor of size $(s+\rho-1)(w+\rho-1)\times \rho^2 \times N$ denoted by $\mathcal{X}$, all the vectorized convolution kernels $\vecto(p_n)$ to form a 3-order tensor of size $\rho^2\times 1 \times N$ denoted by $\mathcal{P}$, and all the vectorized sub-images $\vecto(s_n)$ to form a 3-order tensor of size $(s+\rho-1)(w+\rho-1)\times 1 \times N$ denoted by $\mathcal{S}$.
Then the sequence of convolution operations $x_n*p_n$ ($n=1,2,\ldots,N$) can be computed as a single tensor-multiplication operation:
\[
\mathcal{X}\times \mathcal{P} = \mathcal{S}.
\]
This tensor-multiplication operation is the desirable format for GPU parallel computing for the sequence of convolution operations.

We next synthesize the 3-order tensor $\mathcal{S}$ to yield the inner sum in (\ref{nabla f}).
Let us construct a matrix $\mathcal{L}$ of size $(Ns+\rho-1) \times N(s+\rho-1)$ as follows: the $n$th $(s+\rho-1)$-column block of $\mathcal{L}$ contains all zero entries except a unit matrix of size $(s+\rho-1)$ in the rows $1:(s+\rho-1)+(n-1)s$.
Then the inner sum in (\ref{nabla f}) is given by the matrix-multiplication operation
\[
\mathcal{L} \times \conc(\vecto^{-1}(\mathcal{S})),
\]
where $\vecto^{-1}(\cdot)$ reshapes $\mathcal{S}$ to a 3-order tensor of size $(s+\rho-1)\times (w+\rho-1)\times N$ which is then concatenated to form an $N(s+\rho-1)\times (Ns+\rho-1)$-matrix.

Similarly parallel computing can also be deployed on the sequence of outer convolution operations in (\ref{nabla f}).

\subsection*{Proof of Lemma \ref{l:smooth of f}}\label{a:smoothness of f}
\begin{proof}
	\ref{as:i} The indicator function $\iota_\Omega$ is proper closed and convex as the set $\Omega$ is nonempty closed and convex, see, \textit{e.g.} \cite{VA}.
	
	\ref{as:ii} The function $f$ defined in (\ref{OP1}) is the composition of the squared norm and a linear function in $x$, which are both lower semicontinuous and convex. Thus $f$ is proper closed and convex \cite[Theorem 5.7]{Roc70}. Also, $\dom(f) =\mathbb{E}$ is convex and $\dom(\iota_\Omega) = \Omega \subset \intr\paren{\dom(f)}=\mathbb{E}$.
	
	\ref{as:iii} Note that as the optical aberration of the camera is neglected, all the PSFs are \emph{radial}, in particular, it holds that $p_n^{\dagger}=p_n$. Then by Lemma \ref{l:nabla f} $f$ is differentiable everywhere with the gradient given by, $\forall x\in \mathbb{E}$,
	\begin{equation*}
		\nabla f(x) = -\sum_{n=1}^{N}\mu_n\odot \paren{p_n*\paren{i - \sum_{m=1}^{N}\paren{\mu_m\odot x} * p_m}}.
	\end{equation*}
	Thus for any $x,y\in \mathbb{E}$, it holds that
	\[
	\nabla f(x) - \nabla f(y) = \sum_{n=1}^{N}\mu_n\odot \paren{p_n*r},
	\]
	where $r:=\sum_{m=1}^{N}\paren{\mu_m\odot (y-x)} * p_m$.
	Since the convolution operator is linear and $\norm{p_m}\le \norm{p_m}_1=1$, it holds that
	\begin{equation}\label{est_in}
		\begin{aligned}
			\norm{r}^2 &
			\le \sum_{m=1}^{N}\norm{\mu_m\odot (y-x)}^2 \norm{p_m}^2\\
			&\le \sum_{m=1}^{N}\norm{\mu_m\odot (y-x)}^2 = \norm{y-x}^2.
		\end{aligned}
	\end{equation}
	Let $r_n:= \mu_n\odot \paren{p_n*r}$ for each $n=1,2,\ldots,N$. Then its entries satisfy
	\begin{align*}
		|r_n(\xi,\eta)|^2 &= \left|\sum_{i,j=(\rho-1)/2}^{(\rho-1)/2} p_n(i,j)\, r(\xi+i,\eta+j)\right|^2\\
		&\le \norm{p_n}^2\sum_{i,j=(\rho-1)/2}^{(\rho-1)/2} r^2(\xi+i,\eta+j)\\
		&\le \sum_{i,j=(\rho-1)/2}^{(\rho-1)/2} r^2(\xi+i,\eta+j).
	\end{align*}
	As the last estimate is independent of individual PSFs and the pixels of $r$ play symmetric roles with respect to $n$ and coordinates $(\xi,\eta)$, we get
	\begin{equation}\label{est_out}
		\norm{\nabla f(x) - \nabla f(y)}^2 \le \rho^2\norm{r}^2.
	\end{equation}
	The combination of (\ref{est_in}) and (\ref{est_out}) yields that
	\[
	\norm{\nabla f(x) - \nabla f(y)} \le \rho\norm{r} \le \rho \norm{y-x}.
	\]  
	Consequently, $f$ is $\rho$-smooth on $\mathbb{E} = \intr\paren{\dom(f)}$ as claimed.
	
	\ref{as:iv} Since $f$ is lower semicontinuous and convex in view of \ref{as:ii} and the constraint set $\Omega$ is convex and compact, the solution set of the minimization problem (\ref{NDRP}) is nonempty closed convex and compact.
\qed
\end{proof}


%
%

\bibliographystyle{plain}       
\bibliography{shortbib}   


\end{document}